\newtheorem{lemma}{Lemma}
\newcommand*\samethanks[1][\value{footnote}]{\footnotemark[#1]}
\newcommand{\myGlobalTransformation}[2]
{
	\pgftransformcm{1}{0}{0}{1}{\pgfpoint{#1cm}{#2cm}}
}
\title{Online Prediction of Dyadic Data with Heterogeneous Matrix Factorization}
\author{Guangyong Chen \thanks{All authors are with the Chinese University of Hong Kong, Shatin, N.T., Hong Kong. The contact information can be found on the website http://www.cse.cuhk.edu.hk/$\sim$pheng/}
	\and Fengyuan Zhu \samethanks
	\and Pheng Ann Heng  \samethanks }
\begin{document}

\maketitle

\begin{abstract}
Dyadic Data Prediction (DDP) is an important problem in many research areas. This paper develops a novel fully Bayesian nonparametric framework which integrates two popular and complementary approaches, discrete mixed membership modeling and continuous latent factor modeling into a unified Heterogeneous Matrix Factorization~(HeMF) model, which can predict the unobserved dyadics accurately. The HeMF can determine the number of communities automatically and exploit the latent linear structure for each bicluster efficiently. We propose a Variational Bayesian method to estimate the parameters and missing data. We further develop a novel online learning approach for Variational inference and use it for the online learning of HeMF, which can efficiently cope with the important large-scale DDP problem. We evaluate the performance of our method on the EachMoive, MovieLens and Netflix Prize collaborative filtering datasets. The experiment shows that, our model outperforms state-of-the-art methods on all benchmarks. Compared with Stochastic Gradient Method (SGD), our online learning approach achieves significant improvement on the estimation accuracy and robustness.
\end{abstract}

\section{Introduction}\label{sec:introduction}

Over the past few decades, dyadic data prediction (DDP) problem \cite{1999HofmannDDP} has attracted lots of research attentions from many areas, including pattern recognition, neural computation, statistics, machine learning and data mining. Dyadic means an ordered pair of objects and the goal of DDP is to predict the value of unseen dyadic given the values of observed ones. Mathematically, Candes and Tao \cite{2010CandesMC} defined the DDP problem as the process of 
\emph{ "recovering the whole matrix $R\in\mathbb{R}^{U\times M}$ from only a sampled set of its entries $\{r_{ij}\}_{(i,j)\in\Omega}$, with  $\Omega$ being a subset of complete  set of entries $[U]\times[M]$"}. Here and in the sequel, $[U]$ denotes the set $\{1,\ldots,U\}$. As illustrated in Fig. \ref{fig_ProDef}, the task of DDP is to estimate the values of white entries~(unobserved dyadic) in each matrix given the colored entries denoting the observed dyadic. An important application of this method is the recommendation system which predicts the preference of users to a special item based on the learned users' taste information and item's latent properties. This problem also finds large amount of practical applications in other research fields, including the image completion task in the filed of computer vision, blind affinity prediction in Bioinformatics \cite{2014NatarGene}, click prediction in Web search \cite{2013SuiLink}, and other applications \cite{2006PascualnsNMF,2011SandlenMF,2012LiucnMF,2015ZitnikPR,2014GillisFMF}.


The DDP is an ill-posed problem and it is impossible to determine the missing entries without making any assumption about the matrix $R$. A popular approach for this problem is to assume that the unknown matrix $R$ has low rank $L$ or has approximately low rank $L$. It has been proved that under certain incoherence assumptions on the singular vectors of the matrix $R$, exact prediction is possible by solving a convenient convex program as long as the number of samples is on the order of $NL\log(N)$ with $N=\max\{U,M\}$ \cite{2009CandesMC,2010CandesMC,2010CandesNMC}. These seminal results have boomed the development of this filed. However, in some practical situations, DDP still remains intractable when it is impossible to acquire enough entries to satisfy the minimum number condition. This problem is common in the area of recommendation system. For example, in the dataset of \emph{Movielens 100k}, the rating matrix $R$ is highly sparse, where only around $5\%$ entries are observed. To tackle this issue, \cite{1999HofmannBPMF,2004MarlinBPMF,2008SalakhutdinovBPMF} proposed the Bayesian Probabilistic Matrix Factorization (BPMF) methods which model $R$ as the product of an user coefficient matrix $A\in\mathbb{R}^{L\times U}$ and an item coefficient matrix $B\in\mathbb{R}^{L\times M}$, with both matrices $A$ and $B$ following Gaussian distributions. 

Though BPMF methods perform remarkably well on the DDP task, they fail to capture the heterogeneous nature of objects and their interactions because of the Gaussian assumption. The heterogeneous natural of objects is common in real commercial recommendation systems. For example, in \emph{Douban.com}, users are usually clustered into different communities depending on their own interests, and items are often categorized into multiple groups based on their own properties. Thus, it is not surprising that users from different groups may have quite distinct opinions for some movies, such as \emph{"Napoleon Dynamite", "Kill Bill: Volume 1", "Sideways"}. The Gaussian assumption of BPMF cannot capture this feature properly resulting to poor prediction results. This issue is significant and should be well considered when developing algorithms for solving DDP tasks, or the prediction performance will be highly affected empirically. To tackle this issue, models like Mixed Membership Stochastic Blockmodel~\cite{2009AiroldiMMSD} and Bi-LDA~\cite{porteous2008multi} have been proposed with the assumption that objects are generated from different communities. They introduce context dependence by allowing each object to select a new topic for each new interaction. However, the relatively poor predictive performance of Bi-LDA suggests that the blockmodel assumption is still too restrictive. This paper proposes a novel Heterogeneous Matrix Factorization (HeMF) model to unify discrete mixed membership model with BPMF, where the missing entries are estimated from the divided homogeneous sub-matrices. Fig. \ref{fig_ProDef} (b) illustrates the HeMF for better understanding. The proposed HeMF model can be also viewed as a new type of bicluster model, where each bicluster have an intrinsic linear structure. Compared with the traditional Matrix Factorization approaches \cite{2010MackeyM3F, 2014KimSideInf,2013ParkSideInf,2010AdamsSideInf,2010PorteousSideInf}, the proposed model HeMF incorporates the community membership information from a new perspective. Because the number of bi-clusters are usually unknown, this work further incorporates the Bayesian nonparametric technique and let the data determine the model complexity automatically. This paper further derives an efficient batch-inference algorithm for HeMF under the principle of Variational Bayesian (VB) \cite{2000AttiasVB}. As demonstrated empirically, the derived method converges much faster than traditional sampling methods, and gives better prediction performance.

\begin{figure*}[htp!]
	\centering
	\begin{tikzpicture}[scale = 0.2]
	
	\tikzstyle{timeNode}=[circle, minimum size = 3mm, thick, fill =black!60!green, node distance = 7mm];
	
	\node[timeNode] (t1) at (8,-6.5){};
	\node[timeNode] (t2) at (46,-6.5){};
	\node[timeNode] (t3) at (66,-6.5){};
	
	\draw[color =black!60!green,line width=2,->,>=stealth] (0,-6.5) --(74,-6.5);
	\begin{scope}
	\myGlobalTransformation{0}{0};
	\draw [black!50,step=2] 
	grid (16,16);
	\filldraw[fill=red] (0,2) rectangle (2,4);
	\filldraw[fill=red] (2,6) rectangle (4,8);
	\filldraw[fill=red] (4,10) rectangle (6,12);
	\filldraw[fill=red] (4,0) rectangle (6,2);
	\filldraw[fill=red] (6,14) rectangle (8,16);
	\filldraw[fill=red] (8,8) rectangle (10,10);
	\filldraw[fill=red] (10,2) rectangle (12,4);
	\filldraw[fill=red] (12,12) rectangle (14,14);
	\filldraw[fill=red] (14,4) rectangle (16,6);
	
	\node[rotate=30,scale=0.6] at (1,16.7) {\textcolor[rgb]{0,1,1} {I1}};
	\node[rotate=30,scale=0.6] at (3,16.7) {\textcolor[rgb]{0,1,1} {I2} };
	\node[rotate=30,scale=0.6] at (5,16.7) {\textcolor[rgb]{1,0,1}{I3}};
	\node[rotate=30,scale=0.6] at (7,16.7) {\textcolor[rgb]{0,1,1}{I4}};
	\node[rotate=30,scale=0.6] at (9,16.7) {\textcolor[rgb]{1,0,1}{I5}};
	\node[rotate=30,scale=0.6] at (11,16.7) {\textcolor[rgb]{1,0,1}{I6}};
	\node[rotate=30,scale=0.6] at (13,16.7) {\textcolor[rgb]{0,1,1}{I7}};
	\node[rotate=30,scale=0.6] at (15,16.7) {\textcolor[rgb]{1,0,1}{I8}};
	
	\node[rotate=60,scale=0.6] at (-0.7,15) {\textcolor[rgb]{0,0,0.5}{U1}};
	\node[rotate=60,scale=0.6] at (-0.7,13) {\textcolor[rgb]{1,0.27,0}{U2}};
	\node[rotate=60,scale=0.6] at (-0.7,11) {\textcolor[rgb]{0,0,0.5}{U3}};
	\node[rotate=60,scale=0.6] at (-0.7,9) {\textcolor[rgb]{0,0,0.5}{U4}};
	\node[rotate=60,scale=0.6] at (-0.7,7) {\textcolor[rgb]{1,0.27,0}{U5}};
	\node[rotate=60,scale=0.6] at (-0.7,5) {\textcolor[rgb]{1,0.27,0}{U6}};
	\node[rotate=60,scale=0.6] at (-0.7,3) {\textcolor[rgb]{0,0,0.5}{U7}};
	\node[rotate=60,scale=0.6] at (-0.7,1) {\textcolor[rgb]{1,0.27,0}{U8}};
	
	\node[scale=0.9] at (8,-5.5) {Time $t$};
	\node[scale=0.9] at (8,-3.7) {$(a)$};
	\end{scope}
	
	\begin{scope}
	\myGlobalTransformation{18}{9};
	\draw [black!50,step=2] 
	grid (8,8);
	\filldraw[fill=red] (2,4) rectangle (4,6);
	\filldraw[fill=red] (6,6) rectangle (8,8);
	
	\node[rotate=30,scale=0.6] at (1,8.7) {\textcolor[rgb]{0,1,1} {I1}};
	\node[rotate=30,scale=0.6] at (3,8.7) {\textcolor[rgb]{0,1,1} {I2} };
	\node[rotate=30,scale=0.6] at (5,8.7) {\textcolor[rgb]{0,1,1}{I4}};
	\node[rotate=30,scale=0.6] at (7,8.7) {\textcolor[rgb]{0,1,1}{I7}};
	
	\node[rotate=60,scale=0.6] at (-0.7,7) {\textcolor[rgb]{1,0.27,0}{U2}};
	\node[rotate=60,scale=0.6] at (-0.7,5) {\textcolor[rgb]{1,0.27,0}{U5}};
	\node[rotate=60,scale=0.6] at (-0.7,3) {\textcolor[rgb]{1,0.27,0}{U6}};
	\node[rotate=60,scale=0.6] at (-0.7,1) {\textcolor[rgb]{1,0.27,0}{U8}};
	
	\node[scale=0.9] at (4,10) {Submatrix $1$};
	\end{scope}
	
	\begin{scope}
	\myGlobalTransformation{18}{-1};
	\draw [black!50,step=2] 
	grid (8,8);
	\filldraw[fill=red] (4,0) rectangle (6,2);
	\filldraw[fill=red] (2,2) rectangle (4,4);
	\filldraw[fill=red] (0,4) rectangle (2,6);
	
	\node[rotate=30,scale=0.6] at (1,8.7) {\textcolor[rgb]{1,0,1} {I3}};
	\node[rotate=30,scale=0.6] at (3,8.7) {\textcolor[rgb]{1,0,1} {I5} };
	\node[rotate=30,scale=0.6] at (5,8.7) {\textcolor[rgb]{1,0,1}{I6}};
	\node[rotate=30,scale=0.6] at (7,8.7) {\textcolor[rgb]{1,0,1}{I8}};
	
	\node[rotate=60,scale=0.6] at (-0.7,7) {\textcolor[rgb]{0,0,0.5}{U1}};
	\node[rotate=60,scale=0.6] at (-0.7,5) {\textcolor[rgb]{0,0,0.5}{U3}};
	\node[rotate=60,scale=0.6] at (-0.7,3) {\textcolor[rgb]{0,0,0.5}{U4}};
	\node[rotate=60,scale=0.6] at (-0.7,1) {\textcolor[rgb]{0,0,0.5}{U7}};
	\node[scale=0.9] at (4,-1.5) {Submatrix $3$};
	\end{scope}

	\begin{scope}
	\myGlobalTransformation{28}{9};
	\draw [black!50,step=2] 
	grid (8,8);
	\filldraw[fill=red] (0,0) rectangle (2,2);
	\filldraw[fill=red] (6,2) rectangle (8,4);
	
	\node[rotate=30,scale=0.6] at (1,8.7) {\textcolor[rgb]{1,0,1} {I3}};
	\node[rotate=30,scale=0.6] at (3,8.7) {\textcolor[rgb]{1,0,1} {I5} };
	\node[rotate=30,scale=0.6] at (5,8.7) {\textcolor[rgb]{1,0,1}{I6}};
	\node[rotate=30,scale=0.6] at (7,8.7) {\textcolor[rgb]{1,0,1}{I8}};
	
	\node[rotate=60,scale=0.6] at (-0.7,7) {\textcolor[rgb]{1,0.27,0}{U2}};
	\node[rotate=60,scale=0.6] at (-0.7,5) {\textcolor[rgb]{1,0.27,0}{U5}};
	\node[rotate=60,scale=0.6] at (-0.7,3) {\textcolor[rgb]{1,0.27,0}{U6}};
	\node[rotate=60,scale=0.6] at (-0.7,1) {\textcolor[rgb]{1,0.27,0}{U8}};
	\node[scale=0.9] at (4,10) {Submatrix $2$};
	\end{scope}

	\begin{scope}
	\myGlobalTransformation{28}{-1};
	\draw [black!50,step=2] 
	grid (8,8);
	\filldraw[fill=red] (0,0) rectangle (2,2);
	\filldraw[fill=red] (4,6) rectangle (6,8);
	
	\node[rotate=30,scale=0.6] at (1,8.7) {\textcolor[rgb]{0,1,1} {I1}};
	\node[rotate=30,scale=0.6] at (3,8.7) {\textcolor[rgb]{0,1,1} {I2} };
	\node[rotate=30,scale=0.6] at (5,8.7) {\textcolor[rgb]{0,1,1}{I4}};
	\node[rotate=30,scale=0.6] at (7,8.7) {\textcolor[rgb]{0,1,1}{I7}};
	
	\node[rotate=60,scale=0.6] at (-0.7,7) {\textcolor[rgb]{0,0,0.5}{U1}};
	\node[rotate=60,scale=0.6] at (-0.7,5) {\textcolor[rgb]{0,0,0.5}{U3}};
	\node[rotate=60,scale=0.6] at (-0.7,3) {\textcolor[rgb]{0,0,0.5}{U4}};
	\node[rotate=60,scale=0.6] at (-0.7,1) {\textcolor[rgb]{0,0,0.5}{U7}};
	\node[scale=0.9] at (4,-1.5) {Submatrix $4$};
	\node[scale=0.9] at (-0.5,-3) {$(b)$};
	\end{scope}
	
	\begin{scope}
	\myGlobalTransformation{38}{0};
	\draw [black!50,step=2] 
	grid (16,16);
	\filldraw[fill=red] (0,2) rectangle (2,4);
	\filldraw[fill=red] (2,6) rectangle (4,8);
	\filldraw[fill=red] (4,10) rectangle (6,12);
	\filldraw[fill=red] (4,0) rectangle (6,2);
	\filldraw[fill=red] (6,14) rectangle (8,16);
	\filldraw[fill=red] (8,8) rectangle (10,10);
	\filldraw[fill=red] (10,2) rectangle (12,4);
	\filldraw[fill=blue] (12,12) rectangle (14,14);
	\filldraw[fill=red] (14,4) rectangle (16,6);
	
	\filldraw[fill=blue] (8,4) rectangle (10,6);
	\filldraw[fill=blue] (0,14) rectangle (2,16);
	\filldraw[fill=blue] (12,8) rectangle (14,10);
	\filldraw[fill=blue] (14,0) rectangle (16,2);
	
	\node[rotate=30,scale=0.6] at (1,16.7) {\textcolor[rgb]{0,1,1}{I1}};
	\node[rotate=30,scale=0.6] at (3,16.7) {\textcolor[rgb]{0,1,1}{I2}};
	\node[rotate=30,scale=0.6] at (5,16.7) {\textcolor[rgb]{1,0,1}{I3}};
	\node[rotate=30,scale=0.6] at (7,16.7) {\textcolor[rgb]{0,1,1}{I4}};
	\node[rotate=30,scale=0.6] at (9,16.7) {\textcolor[rgb]{1,0,1}{I5}};
	\node[rotate=30,scale=0.6] at (11,16.7) {\textcolor[rgb]{1,0,1}{I6}};
	\node[rotate=30,scale=0.6] at (13,16.7) {\textcolor[rgb]{0,1,1}{I7}};
	\node[rotate=30,scale=0.6] at (15,16.7) {\textcolor[rgb]{1,0,1}{I8}};
	
	\node[rotate=60,scale=0.6] at (-0.7,15) {\textcolor[rgb]{0,0,0.5}{U1}};
	\node[rotate=60,scale=0.6] at (-0.7,13) {\textcolor[rgb]{1,0.27,0}{U2}};
	\node[rotate=60,scale=0.6] at (-0.7,11) {\textcolor[rgb]{0,0,0.5}{U3}};
	\node[rotate=60,scale=0.6] at (-0.7,9) {\textcolor[rgb]{0,0,0.5}{U4}};
	\node[rotate=60,scale=0.6] at (-0.7,7) {\textcolor[rgb]{1,0.27,0}{U5}};
	\node[rotate=60,scale=0.6] at (-0.7,5) {\textcolor[rgb]{1,0.27,0}{U6}};
	\node[rotate=60,scale=0.6] at (-0.7,3) {\textcolor[rgb]{0,0,0.5}{U7}};
	\node[rotate=60,scale=0.6] at (-0.7,1) {\textcolor[rgb]{1,0.27,0}{U8}};

	\node[scale=0.9] at (8,-5.5) {Time $t+1$};
	\node[scale=0.9] at (8,-3.7) {$(c)$};
	\end{scope}
	
	\begin{scope}
	\myGlobalTransformation{56}{-1};
	\draw [black!50,step=2] 
	grid (18,18);
	\filldraw[fill=red] (0,4) rectangle (2,6);
	\filldraw[fill=red] (2,8) rectangle (4,10);
	\filldraw[fill=red] (4,12) rectangle (6,14);
	\filldraw[fill=red] (4,2) rectangle (6,4);
	\filldraw[fill=red] (6,16) rectangle (8,18);
	\filldraw[fill=red] (8,10) rectangle (10,12);
	\filldraw[fill=red] (10,4) rectangle (12,6);
	\filldraw[fill=red] (12,14) rectangle (14,16);
	\filldraw[fill=red] (14,6) rectangle (16,8);
	
	\filldraw[fill=blue] (8,6) rectangle (10,8);
	\filldraw[fill=blue] (0,16) rectangle (2,18);
	\filldraw[fill=blue] (12,10) rectangle (14,12);
	\filldraw[fill=blue] (14,2) rectangle (16,4);
	
	\filldraw[fill=green] (2,14) rectangle (4,16);
	\filldraw[fill=green] (8,0) rectangle (10,2);
	\filldraw[fill=green] (0,0) rectangle (2,2);
	\filldraw[fill=green] (16,12) rectangle (18,14);
	
	\node[rotate=30,scale=0.6] at (1,18.7) {\textcolor[rgb]{0,1,1}{I1}};
	\node[rotate=30,scale=0.6] at (3,18.7) {\textcolor[rgb]{0,1,1}{I2}};
	\node[rotate=30,scale=0.6] at (5,18.7) {\textcolor[rgb]{1,0,1}{I3}};
	\node[rotate=30,scale=0.6] at (7,18.7) {\textcolor[rgb]{0,1,1}{I4}};
	\node[rotate=30,scale=0.6] at (9,18.7) {\textcolor[rgb]{1,0,1}{I5}};
	\node[rotate=30,scale=0.6] at (11,18.7) {\textcolor[rgb]{1,0,1}{I6}};
	\node[rotate=30,scale=0.6] at (13,18.7) {\textcolor[rgb]{0,1,1}{I7}};
	\node[rotate=30,scale=0.6] at (15,18.7) {\textcolor[rgb]{1,0,1}{I8}};
	\node[rotate=30,scale=0.6] at (17,18.7) {I9};
	
	\node[rotate=60,scale=0.6] at (-0.7,17) {\textcolor[rgb]{0,0,0.5}{U1}};
	\node[rotate=60,scale=0.6] at (-0.7,15) {\textcolor[rgb]{1,0.27,0}{U2}};
	\node[rotate=60,scale=0.6] at (-0.7,13) {\textcolor[rgb]{0,0,0.5}{U3}};
	\node[rotate=60,scale=0.6] at (-0.7,11) {\textcolor[rgb]{0,0,0.5}{U4}};
	\node[rotate=60,scale=0.6] at (-0.7,9) {\textcolor[rgb]{1,0.27,0}{U5}};
	\node[rotate=60,scale=0.6] at (-0.7,7) {\textcolor[rgb]{1,0.27,0}{U6}};
	\node[rotate=60,scale=0.6] at (-0.7,5) {\textcolor[rgb]{0,0,0.5}{U7}};
	\node[rotate=60,scale=0.6] at (-0.7,3) {\textcolor[rgb]{1,0.27,0}{U8}};
	\node[rotate=60,scale=0.6] at (-0.7,1) {U9};
	
	\node[scale=0.9] at (8,-4.5) {Time $t+2$};
	\node[scale=0.9] at (8,-3) {$(d)$};
	\end{scope} 
	
	\end{tikzpicture}
	\caption{An illustration for the rating matrix $R$ evolved with time, where $I1$ means the $1^{st}$ item and $U1$ represents the $1^{st}$ user. For objects from different communities, their index are marked by different colors accordingly. (a) denotes the sparse rating matrix acquired at time $t$. (b) contains $4$ homogeneous sub-matrices divided by the rating matrix observed at time $t$. (c) denotes the rating matrix observed at time $t+1$, where new observations are marked by the blue color. Note that the $2^{nd}$ user changed his rating score for the $7^{st}$ item. (d) contains the rating matrix observed at time $t+2$, where new observations are marked by the color green, and new item and user are registered.}
	\label{fig_ProDef}
\end{figure*}
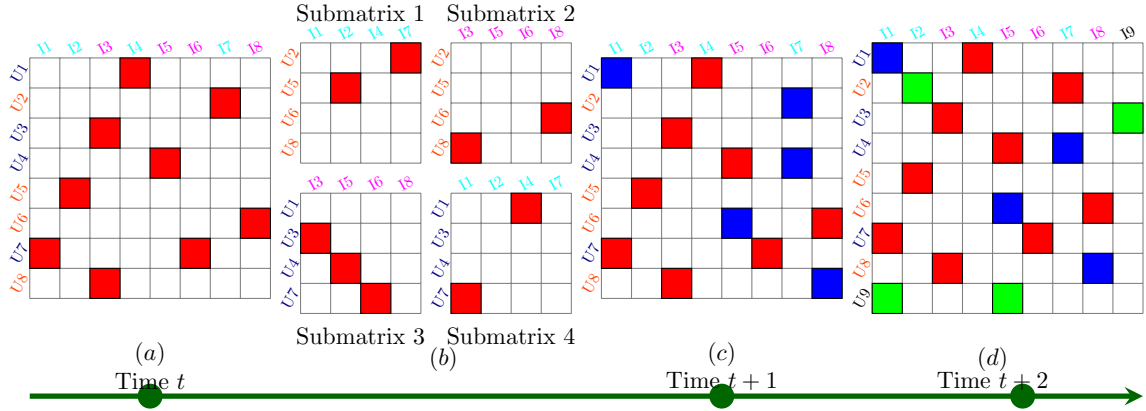

The era of \emph{Big Data} presents new challenges for our DDP task. Many real world applications involve massive amount of data that even cannot be accommodated entirely in the memory. Moreover, the amount of data even increases without any limitation as time goes on. For example, users give new ratings to items at nearly all time in Amazon, and there are always new user accounts registered online and new items launched into the market. Mathematically speaking, new entries are labeled in the rating matrix $R$ continuously, and the size of $R$ will become larger as time goes on. This process can be better understood in Fig. \ref{fig_ProDef}. Because both sampling and variational methods maintain the entire configuration and perform iterative updates of multiple passes, these inference strategies limit their implementations on the massive stream data. Stochastic Gradient Descent (SGD) and sampling approaches have been applied in a sequential setting for matrix factorization \cite{2007MnihPMF,2015JayaOMF}. However, they are not embarrassingly online and hence cannot be directly applied to the stream data. As observed empirically, SGD always gives the results vibrating around an unsatisfied solution and faces a serious overfitting problem.  These challenges motivate us to develop a novel principle of variational Bayesian for massive stream dataset, which is different from the pioneer works \cite{2013BroderickSVI,2014TankSVI,2015McInerneyPPos,2013LinSVA}. Moreover, as our best knowledge, our approach is the first online variational inference approach to the problem of DDP task.

We summarize the contributions of this work as follows:
\begin{itemize}
	\item We develop a new Heterogeneous Matrix Factorization (HeMF) model for DDP task, which can efficiently capture the heterogeneous natural for Dyadic data in recommender systems. We construct HeMF in a Bayesian nonparametric manner and learn the number of communities and hidden dimensionality automatically during the implementation. HeMF model can be also viewed as a novel type of Bi-Cluster model, and it is the first method to introduce intrinsic linear structures into each bi-cluster.
	\item We develop a batch VB (bVB) inference method for learning the proposed HeMF model, which achieves better performance and converges faster compared with the state-of-the-art methods.
	\item We investigate an emerging problem in our paper, which is \emph{how to handle the massive data problem in DDP task}. This is an interesting problem in the field of DDP especially in the application of recommender system. Under the principle of Variational Bayesian, we derive a novel online Variational Bayesian (oVB) method to tackle this problem in a fraction of the time required by traditional inference. The performance of oVB is naturally guaranteed.
	\item We further develop an empirical Variational Bayesian (eVB) procedure to improve the performance of bVB and oVB. We apply both algorithms on real Dyadic data and the experimental results demonstrate that our methods significantly outperform the state-of-the-art approaches consistently. 
\end{itemize}

We organize the rest of this paper as follows, Sec.~2 reviews the background of Dirichlet Process for the construction of HeMF model; Sec.~3 introduces our HeMF model which is a Coupled Dirichlet Process Model to describe the generative process of factor vectors; Sec. 4 develops an efficient bVB inference method to infer the proposed HeMF model with batch of data; Sec. 5 derives a novel online learning method under a newly proposed online learning principle; Sec. 6 further proposes an empirical Variational Bayesian (eVB) to infer the hyper-parameters for bVB and oVB algorithm to improve their performance; to evaluate the performance of our method, Sec.~7 proposes extensive comparative studies of our approach with  previous ones on four real-world dataset, where our methods achieve superior performance over the competitive ones; and Sec.~8 concludes this paper.

\section{Dirichlet Process Mixture Model}
This section introduces the Dirichlet Process Mixture Model (DPMM), which is among the most popular clustering models in practice for analyzing the heterogeneous data. Different from the traditional parametric models for clustering, DPMM allows the number of groups to vary during inference, which provides great flexibility for exploratory analysis. 

We first introduce the Dirichlet Process (DP) \cite{1973FergusonDP}, which is typically denoted by $DP(\alpha, \mu)$ with a concentration parameter $\alpha$ and a base distribution $\mu$. A DP can be well constructed via the Chinese Restaurant Process~(CRP). Given a Chinese restaurant with countably infinite tables, customers walk in one after another and sit down at a certain table with the following scheme:
\begin{itemize}
	\item[1.] The first customer always chooses the first table.
	\item[2.] The $ t $th customer chooses either an unoccupied table with probability $ \dfrac{\alpha}{t + \alpha} $; or an occupied table with probability $ \dfrac{c}{t+\alpha} $, where $ c $ is the number of people sitting at that table. 
\end{itemize}
For the convenience of inference, Sethuraman~\cite{1991SethuramanDP} proposed a stick-breaking approach for DP construction, which can be defined as
\begin{equation}
\begin{split}
D &= \sum_{k=1}^{\infty} \pi_k \delta_{\phi_k},\\
\mbox{with}\hspace{2mm}\phi_k&\sim \mu, \forall k=1,2,\ldots,\\
\pi_k& = v_k\prod_{l=1}^{k-1}v_l, v_k\sim \mbox{Beta}(1,\alpha).\\
\end{split}
\end{equation}
Here, $ \mbox{Beta}(\alpha, \beta) $ denotes a Beta distribution with parameter $ \alpha $ and $ \beta $. 

The sample paths of a DP are almost sure discrete. Due to this nice property, the DP is widely used in the construction of mixture model which is very useful for modeling heterogeneous data. A DPMM can be well expressed with the following generative process
\begin{equation}
\begin{split}
&\hspace{10mm}D\sim DP(\alpha,\mu),\\
&\hspace{15mm}\theta_i\sim D,\\
&x_i\sim F(\cdot|\theta_i), \forall i=1,2,\ldots,n,
\end{split}
\end{equation}
where data $x_1,\ldots,x_n$ are the realizations from distribution $F$ with parameter $\theta_1,\ldots,\theta_n$. Because an atom $\phi_k$ can be repeatedly generated from $D$ with positive probability, there is a partition $\{\mathcal{P}_1,\ldots,\mathcal{P}_K\}$ of $\{1,\ldots,n\}$ such that $\theta_i$ are identical for all $i\in\mathcal{P}_k$, which we denote by $\phi_k$. 

The DPMM serves as a foundation for varieties of Bayesian nonparametric models, and has achieved substantial progress on representing feature-based data. We will use the DP and DPMM to construct our HeMF model for the DDP task.

\section{Heterogeneous Matrix Factorization with Coupled Dirichlet Process}

This section describes the Heterogeneous Matrix Factorization (HeMF) model for DDP, which utilizes a coupled Dirichlet Process to describe the generative model of a rating matrix.  Suppose we have $U$ users and $M$ items, and let $r_{ij}$ be the rating of user $ i $ for item $ j $, then we have:
\begin{equation}
\begin{split}
r_{ij} = a_i^Tb_j + e_{ij}.
\label{Eq_MFDef}
\end{split}
\end{equation}
Here, $b_j\in\mathbb{R}^{L\times 1}$ denotes an item-specific feature vector, where each element represents an objective score for one criterion, such as picture, directing, actor, actress, etc used in Academy Award of Merit; and  $a_i\in\mathbb{R}^{L\times 1}$ denotes the preference of user $ i $ for these criterion. Thus, $a_i^Tb_j$ presents the weighted average score of user $ i $ for item $ j $. Moreover, $e_{ij}$ denotes the zero-mean Gaussian noise with variance $\sigma^2$, which captures the uncertainty of rating behavior. Thus, the conditional distribution of the rating matrix $R\in\mathbb{R}^{U\times M}$ over $A$ and $B$ is given by 
\begin{equation}
p(R_{\Omega}|A,B,\sigma^2) = \prod_{(i,j)\in\Omega} \mathcal{G}(r_{ij}|a_i^Tb_j,\sigma^2),
\end{equation}
where $\mathcal{G}(x|\mu,\Sigma)$ denotes the Gaussian distribution with the mean vector $\mu$ and the covariance matrix $\Sigma$, and $\Omega$ denote the given observed index set . 

Traditional BPMF approaches are commonly developed with the assumption that $ a_i $ and $ b_j $ are Gaussian distributed. However, this assumption is not suitable for real commercial recommendation systems because of the heterogeneous natural of users and items. For example, in the commercial recommendation system \emph{Moivelens}, a movie can be categorized as \emph{computer animation, dramatic, touching or other labels}, while a user can be labeled by their specialties, such as \emph{historian, scientist, engineer, poet.} It is obvious that users from different specialties have their own preference for movies. Thus, the BPMF with Gaussian assumption is unsuitable for DDP task in the problem. It is significant to model such heterogeneous property for better prediction result. To tackle this issue, the HeMF model introduces the DPMM to model the $ a_i $ and $ b_j $ respectively resulting to a novel coupled Dirichlet Process to capture this heterogeneous natural.  

The motivation of applying Dirichlet Process in our paper is very intuitive and can be well explained from the perspective of CRP which is one of the construction of DP. Imaging a recommendation system with infinite number of interest communities, each with infinite capacity. The 1$^{st}$ user construct his own interest community with probability 1. At time $t+1$, a new user chooses at random to participate in one of the following $D+1$ interest communities: directly to the $d$-th already constructed community with probability $\frac{|d|}{t+1+\alpha}$ where $|d|$ is the size of $d$-th community, or establishes a new one with the probability $\frac{\alpha}{t+1+\alpha}$. 

For the convenience of inference, this paper uses the stick-breaking construction to build the HeMF and its generative process is as follows:

\begin{itemize}
	\item  For $i$-th user 
	\begin{easylist}
		& Draw $\pi_d\sim\mbox{Beta}(1,\alpha)$
		& Draw $\phi_d \sim H$
		& Draw $z_i = d\sim \pi_d\prod_{t=1}^{d-1}(1-\pi_t)$
		& Draw $a_i\sim F(\cdot|\phi_{z_i})$
	\end{easylist}
	\item  For $j$-th item
	\begin{easylist}
		& Draw $\omega_k\sim\mbox{Beta}(1,\beta)$
		& Draw $\psi_k \sim G$
		& Draw $\tilde{z}_j = k \sim \omega_k\prod_{t=1}^{k-1}(1-\omega_t)$
		& Draw $b_j\sim F(\cdot|\psi_{\tilde{z}_i})$
	\end{easylist}
	\item  Sample the rating value of their interaction
	\begin{easylist}
		& Draw $r_{ij}\sim\mathcal{G}(\cdot|a_i^Tb_j,\sigma^2)$.
	\end{easylist}
\end{itemize}

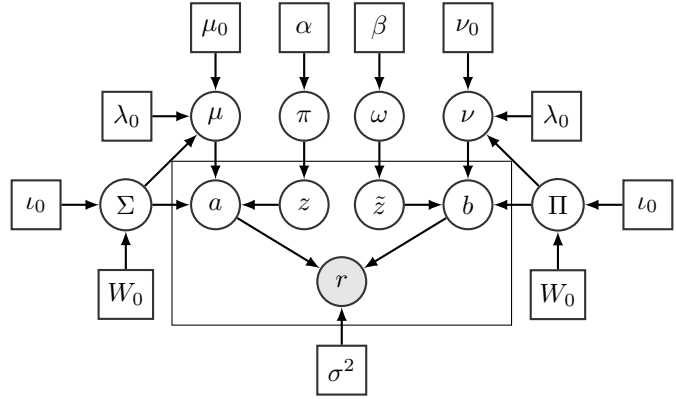
\begin{figure}[htp!]
	\centering
	\begin{tikzpicture}
	\tikzstyle{main}=[circle, minimum size = 6.5mm, thick, draw =black!80, node distance = 5mm]
	\tikzstyle{rect}=[rectangle, minimum size = 6.5mm, thick, draw =black!80, node distance =5mm]
	\tikzstyle{connect}=[-latex, thick]
	\tikzstyle{box}=[rectangle, draw=black!100]
	\node[rect] (e) {$\sigma^2$};
	\node[main, fill = black!10] (r) [above = of e] {$r$};
	\node[main] (z1) at (-0.5,2.2) {$z$};
	\node[main] (z2) at (0.5,2.2) {$\tilde{z}$};
	\node[main] (pi)[above = of z1] {$\pi$};
	\node[rect] (alpha)[above = of pi] {$\alpha$};
	\node[main] (omega)[above = of z2] {$\omega$};
	\node[rect] (beta)[above = of omega] {$\beta$};
	\node[main] (a) [left=of z1] { $a$};
	\node[main] (b) [right=of z2] {$b$};
	\node[main] (mua) [above=of a] {$\mu$ };
	\node[main] (mub) [above=of b] {$\nu$ };
	\node[main] (Sigmaa) [left=of a] {$\Sigma$ };
	\node[main] (Sigmab) [right=of b] {$\Pi$ };
	\node[rect] (mu0) [above=of mua] {$\mu_0$ };
	\node[rect] (nu0) [above=of mub] {$\nu_0$ };
	\node[rect] (lambdaa) [left=of mua] {$\lambda_0$ };
	\node[rect] (lambdab) [right=of mub] {$\lambda_0$ };
	\node[rect] (W0a) [below=of Sigmaa] {$W_0$ };
	\node[rect] (W0b) [below=of Sigmab] {$W_0$ };
	\node[rect] (iotaa) [left=of Sigmaa] {$\iota_0$ };
	\node[rect] (iotab) [right=of Sigmab] {$\iota_0$ };
	\path (alpha) edge [connect] (pi)
	(pi) edge [connect] (z1)
	(beta) edge [connect] (omega)
	(omega) edge [connect] (z2)
	(z1) edge [connect] (a)
	(z2) edge [connect] (b)
	(mua) edge [connect] (a)
	(mub) edge [connect] (b)
	(Sigmaa) edge [connect] (mua)
	(Sigmab) edge [connect] (mub)
	(Sigmaa) edge [connect] (a)
	(Sigmab) edge [connect] (b)
	(a) edge [connect] (r)
	(b) edge [connect] (r)
	(e) edge [connect] (r)
	(mu0) edge [connect] (mua)
	(nu0) edge [connect] (mub)
	(lambdaa) edge [connect] (mua)
	(lambdab) edge [connect] (mub)
	(W0a) edge [connect] (Sigmaa)
	(iotaa) edge [connect] (Sigmaa)
	(W0b) edge [connect] (Sigmab)
	(iotab) edge [connect] (Sigmab);
	\node[rectangle, inner sep=2.4mm,draw=black!100, fit= (z1) (r) (z2) (a) (b)] {};
	\end{tikzpicture}
	\caption{The graphical model for illustrating the proposed HeMF with Coupled Dirichlet Process, where the shadow node means the observed rating score, and the nodes covered by rectangle box mean the missing data. The leaf nodes illustrated in the graphical model mean the hyperparameters of the proposed HeMF model.}
	\label{fig:OurGraphical}
\end{figure}

It can be observed that there are two separate Dirichlet Processes for each user-item interaction, which can be better explained in Fig. \ref{fig:OurGraphical}.  Given $D$ communities for users and $K$ groups for items in total, an observed rating matrix $R$ is divided into $D\times K$ homogeneous sub-matrices, as shown in Fig. \ref{fig_ProDef}. The HeMF efficiently combine the division step and the matrix factorization step into a unified framework.
\subsection{Constructing Base Distributions}
Given the HeMF model, we further propose the base distribution $H$ and $G$ which should capture the previously mentioned low-rank properties. With our approach, the latent feature vectors from the same community should surround a low-rank space. Before stepping into the detailed constructions of $H$ and $G$, we first introduce the following lemma to construct a proxy matrix $Y$ for the target matrix $X$, whose columns coming from the same community.
\begin{lemma}
For a matrix $X\in\mathbb{R}^{m\times n}$ with $m\leq n$, its rank is bounded by the transferred matrix $Y = W^{\frac{1}{2}}(X-M)$, where $W\in\mathbb{R}^{m\times m}$ is a full-rank semi-definite positive matrix and each column in the matrix $M\in\mathbb{R}^{m\times n}$ is the mean of all columns in the matrix $X$. 
\label{lemma_rank}
\end{lemma}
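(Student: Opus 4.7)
The plan is to interpret the lemma as the concrete bound $\operatorname{rank}(X)\le\operatorname{rank}(Y)+1$, since $Y$ absorbs both the invertible transform and the column-wise centering; then to verify this bound via two elementary linear-algebra facts (invariance of rank under invertible multiplication, and subadditivity of rank).

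First, I would deal with the matrix $W^{1/2}$. Because $W$ is assumed to be full-rank and positive semi-definite, it is in fact positive definite, so its unique positive-semidefinite square root $W^{1/2}$ is also invertible. Hence left-multiplication by $W^{1/2}$ is a bijection on $\mathbb{R}^{m\times n}$ and
\begin{equation*}
\operatorname{rank}(Y)=\operatorname{rank}\!\bigl(W^{1/2}(X-M)\bigr)=\operatorname{rank}(X-M).
\end{equation*}

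Second, I would observe that $M$ has rank at most $1$. By construction, every column of $M$ equals the column-mean vector $\bar{x}=\tfrac{1}{n}X\mathbf{1}_n$, so $M=\bar{x}\,\mathbf{1}_n^{\top}$, which is an outer product of two vectors and therefore of rank at most $1$. Writing $X=(X-M)+M$ and invoking the subadditivity of rank then gives
\begin{equation*}
\operatorname{rank}(X)\le\operatorname{rank}(X-M)+\operatorname{rank}(M)\le\operatorname{rank}(Y)+1.
\end{equation*}
Combining these two observations yields the claimed bound.

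There is no real obstacle here; the only subtle point is ensuring that $W^{1/2}$ is invertible (which follows from ``full-rank positive semi-definite'' $=$ positive definite) and recognizing $M$ as a rank-one matrix thanks to its repeated-column structure. The assumption $m\le n$ plays no role in the inequality itself; it only ensures that the ambient bound $\operatorname{rank}(X)\le m$ is the nontrivial one, which is the regime of interest when later constructing a base distribution that enforces a low-rank space for each community of latent factor vectors.
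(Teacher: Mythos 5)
Your argument is correct and follows essentially the same route as the paper: both proofs rest on the observation that $M$ has rank one (all its columns equal the column mean), the subadditivity of rank applied to $X=(X-M)+M$, and the fact that multiplication by the invertible $W^{\frac{1}{2}}$ preserves rank. The one difference is that you establish only the upper bound $\mathrm{rank}(X)\leq\mathrm{rank}(Y)+1$, whereas the paper's proof concludes with the two-sided sandwich $\mathrm{rank}(Y)\leq\mathrm{rank}(X)\leq\mathrm{rank}(Y)+1$; the missing half is a one-line addition, since each column of $X-M$ is a linear combination of columns of $X$ (the mean column itself lies in the column space of $X$), so $\mathrm{rank}(Y)=\mathrm{rank}(X-M)\leq\mathrm{rank}(X)$. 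Adding that remark would make your proposal match the paper's lemma in full.
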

\begin{proof}
Because each column in the matrix $M$ is the same with each other, we have $\mbox{rank}(M)=1$. And we have $\mbox{rank}(X-M)\leq \mbox{rank}(X)$ because all columns in both $X$ and $M$ can be represented by the columns in $X$. With the sub-additivity property of rank operation, we have
 \begin{equation}
 \begin{split}
  &\mbox{rank}(X)\leq \mbox{rank}(X-M)+\mbox{rank}(M).
  \end{split}
 \end{equation}
Thus, we can conclude that 
\begin{equation}
\mbox{rank}(X)-1 \leq \mbox{rank}(X-M)\leq \mbox{rank}(X).
\end{equation}
Because $W$ is a full-rank matrix, we achieve
\begin{equation}
\mbox{rank}(X)-1 \leq \mbox{rank}(Y)\leq \mbox{rank}(X),
\end{equation}
which means that the rank of the target matrix is bounded by the following inequalities,
\begin{equation}
\mbox{rank}(Y)\leq \mbox{rank}(X)\leq\mbox{rank}(Y)+1.
\end{equation}
\end{proof}
Thus, we can regularize the rank of the proxy matrix $Y$, instead of constraining the targeted matrix $X$ directly. An advantage of introducing the proxy matrix into our model stems from the facts that $Y$ provides much more flexibility when modeling $X$.  

The direct optimization  with rank regularization is clearly a NP-hard problem.  Fortunately, as proved in \cite{2010RechtNuCNorm}, the minimum rank approximation can be achieve by minimizing the nuclear norm. Thus, we can obtain,
\begin{equation}
\begin{split}
\min \mbox{rank}(X)\propto \min \mbox{rank}(Y) \propto\min \mbox{tr}(Y^TY).
\end{split}
\end{equation}
This approach has been widely used in machine learning community to promote a low rank solution without a pre-define $L$, and can be further formulated as
\begin{equation}
\begin{split}
\min \mbox{tr}(Y^TY) &\propto \min \sum_{t=1}^{n}(x_t-m)^TW(x_t-m)\\
&\propto \max -\frac{1}{2}\sum_{t=1}^{n}(x_t-m)^TW(x_t-m)\\
&\propto \max \ln \prod_{t=1}^{n}\mathcal{G}(x_t|m,W^{-1})
\end{split}
\end{equation}
with $m=\frac{1}{n}\sum_{t=1}^{n}x_t$. Thus, it can be found that regularizing the rank of $X$ is equivalent to assuming each column of the matrix to follow a Gaussian distribution. Thus, following \cite{2008SalakhutdinovBPMF}, for each user-item interaction, we assume $a_i$ and $b_j$ to be drawn from the following Gaussian distributions,
\begin{equation}
\begin{split}
 F(\cdot|\phi_{d}) &= \mathcal{G}(\cdot|\mu_{d},\Sigma_d),\\
 F(\cdot|\psi_{k}) &= \mathcal{G}(\cdot|\nu_{k},\Pi_k),
\end{split}
\label{Eq_GauPri}
\end{equation}
with $\phi_{d}=\{\mu_{d},\Sigma_d\}$ and $\psi_{k}=\{\nu_{k},\Pi_k\}$. As in \cite{2008SalakhutdinovBPMF,2010MackeyM3F}, we consider a fully Bayesian treatment where the topic model parameters $\phi_d$ and $\psi_k$ are the random realizations from the following distributions respectively,
\begin{equation}
\begin{split}
    H(\phi_{d}) &= \mathcal{G}(\mu_d|\mu_0,\lambda_0\Sigma_d)i\mathcal{W}(\Sigma_d|W_0,\iota_0),\\
	G(\psi_{k}) &= \mathcal{G}(\nu_k|\nu_0,\lambda_0\Pi_k)i\mathcal{W}(\Pi_k|W_0,\iota_0),
\end{split}
\end{equation}
where $i\mathcal{W}(\cdot)$ denotes the inverse Wishart distribution.

Thus, we can obtain the marginal distribution of sparsely observed rating matrix $R$ as follows:

\begin{equation}
\begin{split}
p(R_{\Omega}|\Xi) = \int &p(R_{\Omega},Y,\Theta|\Xi)dYd\Theta,
\end{split}
\label{Eq_HeterMFMAP}
\end{equation}
where $\Xi = \{\mu_0, \nu_0, \sigma^2,W_0,\iota_0,\lambda_0,\alpha, \beta\}$ denotes the hyper-parameters of the proposed model, $\Theta = \{\phi_d\}_{d=1}^D\cup\{\psi_k\}_{k=1}^K\cup\{\pi,\omega\}$ denotes the model parameters, and $Y= \{a_i,z_i\}_{i=1}^U\cup\{b_j,\tilde{z}_j\}_{j=1}^M $ denotes the missing data associated with each observed entry in the rating matrix $R$. The relationships between parameters has been illustrated in Fig. \ref{fig:OurGraphical}, where the observed rating $r$ is shown by a shaded node and the missing variables associated with the observed rating $r$ is covered by a rectangle box. The leaf nodes of this graphical model is shown by a rectangle node and denotes the a hyper-parameter of HeMF. Thus, $p(R_{\Omega},Y,\Theta|\Xi)$ can be decomposed as follows,
\begin{gather}
p(R_{\Omega},Y,\Theta|\Xi)=p(R_{\Omega}|A,B,\sigma^2)p(A|Z,\mu,\Sigma)p(B|\tilde{Z},\nu,\Pi)\nonumber\\
\cdot p(Z|\pi)p(\pi|\alpha)\prod_{d=1}^{D}\mathcal{G}(\mu_d|\mu_0,\lambda_0\Sigma_d)i\mathcal{W}(\Sigma_d|W_0,\iota_0)\nonumber\\
\cdot p(\tilde{Z}|\omega)p(\omega|\beta)\prod_{k=1}^{K}\mathcal{G}(\nu_k|\nu_0,\lambda_0\Pi_d)i\mathcal{W}(\Pi_k|W_0,\iota_0),\nonumber
\end{gather}
with
\begin{gather}
p(Z|\pi) =\prod_{i=1}^{U}\pi_{z_i}\prod_{t=1}^{z_i-1}(1-\pi_{t}),\hspace{1mm} p(\pi|\alpha)= \prod_{d=1}^{D} Beta(\pi_d|1,\alpha),\nonumber\\
p(\tilde{Z}|\omega) =\prod_{j=1}^{M}\omega_{\tilde{z}_j}\prod_{t=1}^{\tilde{z}_j-1}(1-\omega_{t}),\hspace{1mm} p(\omega|\beta)=\prod_{k=1}^{K} Beta(\omega_d|1,\beta),\nonumber\\
p(A|Z,\mu,\Sigma) =\prod_{i=1}^{U}\mathcal{G}(a_i|\mu_{z_i},\Sigma_{z_i}),\nonumber\\
p(B|\tilde{Z},\nu,\Pi) = \prod_{j=1}^{M}\mathcal{G}(b_j|\nu_{\tilde{z}_i},\Pi_{\tilde{z}_i}).\nonumber
\end{gather}

As shown in Eq. \ref{Eq_HeterMFMAP}, the Bayesian framework considers a whole class of models, rather than focusing on a single model to provide a solution to the DDP task. Thus, the HeMF with Bayesian framework can avoid overfitting problem by integrating out the parameters. 

\section{Variational Inference}
\label{sec_variInf}

Unfortunately, direct marginal inference of Eq. \ref{Eq_HeterMFMAP} is intractable. Markov chain Monte Carlo methods \cite{2010MackeyM3F} have been widely used to achieve exact marginal results, but typically require vast computational resources and become inefficient for complex models in high data dimensions. In this section, we derive a batch-based variational Bayesian inference method, which is a practical framework for Bayesian computations in graphical models.

Given the rating matrix $R$ observed on a sampled set of entries $\Omega$, Variational Bayesian (VB) approach introduces a trial distribution $q(Y,\Theta)$ to maximize the lower bound of the marginal distribution, $\mathcal{L}_{q(Y,\Theta)}(R_{\Omega}|\Xi)$, as follows,
\begin{equation}
\begin{split}
\ln & p(R_{\Omega}|\Xi) \geq \mathcal{L}_{q(Y,\Theta)}(R_{\Omega}|\Xi),\\
\end{split}
\end{equation}
where 
\begin{equation}
\begin{split}
\mathcal{L}_{q(Y,\Theta)}(R_{\Omega}|\Xi)&= \mbox{KL}(q(Y,\Theta)||p(R_{\Omega},Y,\Theta|\Xi)).
\label{Eq_VBObj}
\end{split}
\end{equation}
Here, $\mbox{KL}(\cdot)$ is the Kullback-Leibler (KL) distance between the trail distribution $q(Y,\Theta)$ and the joint distribution $p(R_{\Omega}|Y,\Theta,\Xi)p(Y,\Theta|\Xi)$. From Jensen's inequality, the lower bound of $\ln p(R_{\Omega}|\Xi)$ can be achieved by setting $q(Y,\Theta)\propto p(R_{\Omega}|Y,\Theta,\Xi)p(Y,\Theta|\Xi)$. In VB approximation, we often assume that $q(Y,\Theta)$ can be factorized as follows,
\begin{equation}
q(Y,\Theta) = q(Y)q(\pi)q(\omega)\prod_{d=1}^{D}q(\phi_d)\prod_{k=1}^{K}q(\psi_k),
\end{equation} 
where $q(Y)$, $q(\phi_d)$ and $q(\psi_k)$ can be further factorized as,
\begin{gather}
q(Y) = \prod_{i=1}^{U}q(a_i)q(z_i|a_i)\prod_{j=1}^{M}q(b_i)q(\tilde{z}_i|b_i),\nonumber\\
q(\phi_d)  = q(\mu_d)q(\Sigma_d|\mu_d),\hspace{2mm}q(\psi_k)  = q(\nu_k)q(\Pi_k|\nu_k).\nonumber
\end{gather}
This factorization corresponds to an approximation framework developed in physics called \emph{mean field theory} \cite{1988ParisiMeanField}, which breaks the entanglement among the model parameters, and leads to an efficient iterative algorithm.

Let $\theta_j$ denote the $j$-th parameter belonging to the set of parameters $Y\cup \Theta$. With other parameters fixed, the variational approximation problem with respect to $q(\theta_j)$ is equivalent to:
\begin{equation}
\begin{split}
\ln p(R_{\Omega}|\Xi)\geq&  \int q(\theta_j)\ln\frac{\exp\mathbb{E}_{\neq j}\ln p(R_{\Omega},Y,\Theta|\Xi)}{q(\theta_j)}d\theta_j,\\
\end{split}
\end{equation}
where $\mathbb{E}_{\neq j}(\cdot)$ denotes an expectation with respect to the posterior distributions $q$ over all parameters expect $\theta_j$. In this way, the close-form solution of $q(\theta_j)$ satisfies the following condition:
\begin{equation}
q(\theta_j)\propto \exp[\mathbb{E}_{\neq j}\ln p(R_{\Omega}|Y,\Theta,\Xi)p(Y,\Theta|\Xi)].
\label{Eq_PosterInf}
\end{equation} 
Thus, the marginal distribution $q(R_{\Omega}|\Xi)$ can be solved by alternatively calculating Eq. \ref{Eq_PosterInf} for each parameter. 

Actually, the computations of the posterior distribution over $\theta_j$ can be greatly reduced by considering the graphical model, as Fig. \ref{fig:OurGraphical}. When updating variational approximation result with respect to $\theta_j$, whose neighbor nodes are represented as $\theta_{\mathcal{N}_j}$, we can reformulate Eq. \ref{Eq_PosterInf} as
\begin{equation}
q(\theta_j)\propto \exp[\mathbb{E}_{q(\theta_{\mathcal{N}_j})}\ln p (R_{\Omega},\theta_j,\theta_{\mathcal{N}_j}|\Xi)],
\label{Eq_PosterInfSimp}
\end{equation}
where the number of involved parameters is dramatically decreased. For example, in the proposed HeMF model, the neighbor nodes of the stick-breaking vector $\pi$ contain only the hyper-parameter $\alpha$ and the membership parameter $z$. Thus, we only need to consider the prior distribution $p(\pi|\alpha)\prod_{i=1}^{U}p(z_i|\pi)$, as well as the expectation with respect to the associated posterior distribution $\prod_{i=1}^{U}q(z_i)$ in the derivation of posterior distribution over $\pi$. Thus, we have
\begin{equation}
\begin{split}
q(\alpha) \propto \exp[\mathbb{E}_{\prod_{i=1}^{U}q(z_i)}\ln p(\pi|\alpha)\prod_{i=1}^{U}p(z_i|\pi)].
\end{split}
\end{equation}
The detailed formulation of $q(\alpha)$ can be found in the later part of this section. 

The updating formulas associated with the users' taste matrix $A$ can be derived under the principle shown in Eq. \ref{Eq_PosterInfSimp}, while the updating formulas associated with the items' property matrix $B$ can be obtained by altering the notations. The parameters related to the users' taste matrix include $A$, $Z$, $\pi$ and $\{\mu_d,\Sigma_d\}_{d=1}^D$. To simplify our formulations, let $\langle f(x)\rangle = \int q(x)f(x)dx = \mathbb{E}_{q(x)}f(x)$, which denotes the expectation of $f(x)$ with respect to the distribution $q(x)$.

{\bf Estimation of the users' taste matrix $A$:} Based on the principle introduced in Eq. \ref{Eq_PosterInfSimp}, we can obtain that the posterior distribution over $a_i$ satisfies,
\begin{equation}
\begin{split}
\ln q(&a_i) \propto -\frac{1}{2}a^T_i(\frac{\sum_{(i,j)\in\Omega} \langle b_jb_j^T\rangle}{\sigma^2}+\sum_{d=1}^{D}q_i(d)\langle\Sigma_d^{-1}\rangle)a_i \\
&+(\frac{\sum_{(i,j)\in\Omega}r_{i,j}\langle b_j^T\rangle}{\sigma^2}+\sum_{d=1}^{D}q_i(d)\langle\mu_d^T\rangle\langle\Sigma_d^{-1}\rangle)a_i.\nonumber
\end{split}
\end{equation}
Since the parametric form of this posterior distribution is a quadratic function, the posterior distribution over $a_i$ is Gaussian with the following parameters:
\begin{equation}
\begin{split}
\langle a_i\rangle &=(\frac{1}{\sigma^2}\sum_{(i,j)\in\Omega} \langle b_jb_j^T\rangle+\sum_{d=1}^{D}q_i(d)\langle\Sigma_d^{-1}\rangle)^{-1}\\
&\cdot(\frac{1}{\sigma^2}\sum_{(i,j)\in\Omega}r_{i,j}\langle b_j\rangle+\sum_{d=1}^{D}q_i(d)\langle\Sigma_d^{-1}\rangle\langle\mu_d\rangle),\\
\langle a_i&a_i^T\rangle=(\frac{1}{\sigma^2}\sum_{(i,j)\in\Omega} \langle b_jb_j^T\rangle+\sum_{d=1}^{D}q_i(d)\langle\Sigma_d^{-1}\rangle)^{-1}\\
&\hspace{5mm}+\langle a_i\rangle\langle a_i^T\rangle.
\end{split}
\label{Eq_EStepUserTaste}
\end{equation}
From Eq. \ref{Eq_EStepUserTaste}, we find that the expectation of the $i$-th user's taste is jointly determined by rated items and the information of belonged communities of the user. It can be understood from our practical experience that a human's taste can be not only reflected by the choices of items, but also the communities he or she belongs to.

{\bf Estimation of the membership vector of users' taste $Z$:} After obtaining the updated users' taste matrix, we can get that the probability of each user's taste belonging to the $d$-th community satisfies
\begin{equation}
\begin{split}
\ln q(z_i=d) &\propto \gamma_i(d) = \xi_{1,i}(d)+\xi_{2,i}(d),
\end{split}
\label{Eq_EStepMem}
\end{equation}
with
\begin{equation}
\begin{split}
\xi_{1,i}(d) &=  -\frac{1}{2}(\mbox{tr}\langle a_ia_i^T\rangle\langle\Sigma_d^{-1}\rangle-2\langle\mu_d^T\rangle\langle\Sigma_d^{-1}\rangle\langle a_i\rangle\\
&+\mbox{tr}\langle\mu_d\mu_d^T\rangle\langle\Sigma_d^{-1}\rangle+\langle\ln|\Sigma_d|\rangle),\\
\xi_{2,i}(d)  &=\psi(\eta_{1,d}) - \psi(\eta_{1,d}+\eta_{2,d}) \\
&+ \sum_{j=1}^{d-1}[\psi(\eta_{2,j}) - \psi(\eta_{1,j}+\eta_{2,j})],\\
\end{split}
\label{Eq_EStepMemDetail}
\end{equation}
where $\psi(\cdot)$ denotes the digamma function \cite{1964AbramowitzHandbook}, $\eta_{1,d}$ and $\eta_{2,d}$ denote the parameters to describe the posterior distribution over the $d$-th element in the vector $\pi$.
Denoting $q(z_i=d)$ as $q_i(d)$, we can update the indicator variable $z_i$ as follows:
\begin{equation}
\begin{split}
q_i(d) &= \frac{\exp(\gamma_i(d))}{\sum_{j=1}^{D}\exp(\gamma_i(j))}.
\end{split}
\label{Eq_EStepMember}
\end{equation}

As observed in Eq. \ref{Eq_EStepMem}, the computation of $\gamma_i(d)$ consists of two terms, $\xi_{1,i}(d)$ and $\xi_{2,i}(d)$. Let $\xi_{2,i}(d)=0$, we can find that $q_i(d)$ is the same with E-step in the classical Expectation-Maximization (EM) method \cite{1977DempsterEM}. Represented by the stick-breaking process, the effect of Dirichlet Process is introduced into our model by $\xi_{2,i}(d)$. The strategy of slightly revising the E-step to improve the performance of EM algorithm has been heuristically explored in some pioneer works, e.g. RPCL \cite{2002XuRPCL} and DAEM \cite{1998UedaDAEM}.

{\bf Estimation of the Beta vector $\pi$:} After the inference of missing data associated with the users' taste matrix, we can first solve the posterior distributions over the Beta realizations $\{\pi_d\}_{d=1}^D$ used in the stick-breaking construction of Dirichlet Process. Under the principle in Eq. \ref{Eq_PosterInfSimp}, we have 
\begin{equation}
\begin{split}
\ln q(\pi) &\propto \ln \prod_{d=1}^{D}\pi_d^{\rho_d}(1-\pi_d)^{\alpha+\sum_{j=d+1 }^{D}\rho_j-1},
\end{split}
\end{equation}
where $\rho_d = \sum_{i=1}^{U}q_i(d)$ represents the expected number of users coming from the $d$-th community. It is obvious that each element in the Beta vector is independent with each other and beta distributed. For $\pi_d$, the parameters of its posterior distribution is as follows:
\begin{equation}
\begin{split}
\eta_{1,d} & = \rho_d + 1,\hspace{2mm}\eta_{2,d}  =\alpha+\sum_{j=d+1}^{D} \rho_j.
\end{split}
\label{Eq_EStepPi}
\end{equation}
Thus, the expected value of the $d$-th stick length is $\frac{\rho_d + 1}{\sum_{j=d}^{D} \rho_j+1+\alpha}$, where $\alpha$ provides a basis term. 

{\bf Estimation of the $d$-th community mean vector $\mu_d$:} According to the principle in Eq. \ref{Eq_PosterInfSimp}, we can get that the posterior distribution over $\mu_d$ satisfies the following formula, 
\begin{equation}
\begin{split}
\ln q(&\mu_d)\propto -\frac{1}{2}\mu_d^T(\sum_{i=1}^{U}q_i(d)\langle\Sigma_d^{-1}\rangle+\frac{\langle\Sigma_d^{-1}\rangle}{\lambda_0})\mu_d\\
& + (\sum_{i=1}^{U}q_i(d)\langle a_i^T\rangle\langle\Sigma_d^{-1}\rangle+\langle\mu_0^T\rangle\frac{\langle\Sigma_d^{-1}\rangle}{\lambda_0})\mu_d.
\end{split}
\end{equation}
Similar to the inference of users' taste matrix $A$, the posterior distribution of $\mu_d$ is also Gaussian distributed with:

\begin{equation}
\begin{split}
\langle\mu_d\rangle & =\frac{\lambda_0}{\lambda_0+\rho_d}\sum_{i=1}^{U}q_i(d)\langle a_i\rangle+\frac{1}{\lambda_0+\rho_d}\mu_0,\\
\langle\mu_d&\mu_d^T\rangle  =  \frac{\lambda_0}{\lambda_0+\rho_d}\langle\Sigma_d^{-1}\rangle^{-1}+\langle\mu_d\rangle\langle\mu_d^T\rangle.
\end{split}
\label{Eq_EMu}
\end{equation}
It can be observed that the expectation of the $d$-th mean vector, $\langle\mu_d\rangle$, consists of two terms corresponding to the weighted mean of all users' taste vectors, and the prior knowledge to avoid the over-fitting problem respectively. Eq. \ref{Eq_EMu} states that the properties of each community is fully determined by the users belonging to it.

{\bf Estimation of the $d$-th community covariance matrix $\Sigma_d$:} Finally, we introduce our approach to inference the second-order statistics of the $d$ community, denoted as the covariance matrix $\Sigma_d$. Following Eq. \ref{Eq_PosterInfSimp}, we have,
\begin{equation}
\begin{split}
&\hspace{4mm}\ln q(\Sigma_d|\mu_d) \propto   -\frac{1}{2}\mbox{tr}\sum_{i=1}^{U}q_i(d)(a_i-\mu_d)(a_i-\mu_d)^T\Sigma_d^{-1}\\
&-\frac{1}{2}\sum_{i=1}^{U}q_i(d)\ln |\Sigma_d|-\frac{1}{2}\mbox{tr}(W_0\Sigma_d^{-1}) - \frac{\iota_0 +r + 1}{2}\ln |\Sigma_d|\\
&\hspace{11mm}-\frac{1}{2}[\mbox{tr}\frac{(\mu_d-\mu_0)(\mu_d-\mu_0)^T}{\lambda_0}{\Sigma_d^{-1}}+ \ln |\Sigma_d|].\\
\end{split}
\end{equation}
Thus, the posterior distribution of $\Sigma_d$ is also an inverse-Wishart distribution $i\mathcal{W}(W_d,\iota_d)$ with following parameters:
\begin{equation}
\begin{split}
W_d & = \sum_{i=1}^{U}q_i(d)(\langle a_ia_i^T\rangle-2\langle\mu_d\rangle\langle a_i^T\rangle + \langle\mu_d\mu_d^T\rangle)\\
&+  \frac{1}{\lambda_0}(\langle\mu_d\mu_d^T\rangle-2\mu_0\langle\mu_d^T\rangle+\mu_0\mu_0^T)+W_0,\\
\iota_d & = \iota_0 + \rho_d,
\end{split}
\end{equation}
Thus, we can subsequently get
\begin{equation}
\begin{split}
\langle\Sigma_d^{-1} \rangle &= \iota_dW_d^{-1},\\
\langle\ln |\Sigma_d|\rangle & = \ln|\phi_d|-\psi(\frac{\iota_d}{2})-r\ln(2).
\end{split}
\label{Eq_EStepCovariance}
\end{equation}

\begin{algorithm}
	\begin{algorithmic}[1]
		\renewcommand{\algorithmicrequire}{\textbf{Input:}}
		\renewcommand{\algorithmicensure}{\textbf{Output:}}
		\REQUIRE A rating matrix $R$ observed on positions $\Omega$, and the hyper-parameters $\Xi$
		\ENSURE  The completed matrix $\hat{R}$
		\\ \textit{Initialization} :
		\WHILE{not converge}
		\STATE Estimate the users' taste matrix $A$ by Eq. \ref{Eq_EStepUserTaste}
		\STATE Estimate the membership vector of users' taste $Z$ \\by Eq. \ref{Eq_EStepMember}
		\STATE Estimate the Beta vector $\pi$ by Eq. \ref{Eq_EStepPi}
		\STATE Estimate the $d$-th community mean vector $\mu_d$ by \\Eq. \ref{Eq_EMu}
		\STATE Estimate the $d$-th community covariance matrix $\Sigma_d$ by Eq. \ref{Eq_EStepCovariance}
		\ENDWHILE
		\RETURN $\hat{R} = \langle{A}\rangle^T\langle{B}\rangle$ 
	\end{algorithmic} 
	\caption{Batch Variational Bayesian (bVB) Inference for HeMF}
	\label{Alg_VarHeMF}
\end{algorithm}

By iteratively updating formulas from Eq. \ref{Eq_EStepUserTaste} to \ref{Eq_EStepCovariance}, we develop an inference method for the proposed HeMF model, which can achieve at least a local optimal solution. The procedure is summarized in the Alg. \ref{Alg_VarHeMF}. Noted that our inference method will degenerate to the famous Bayesian Probabilistic Matrix Factorization (BPMF) model \cite{1999HofmannBPMF} by setting $K=D=1$. At each step in Alg. \ref{Alg_VarHeMF}, we update the selected parameter with other parameters fixed. Thus, the lower bound of the marginal distribution is maximized successively in our method, whose convergence is naturally guaranteed.

\section{Online Variational Inference}

The method introduced in the last section requires to load the whole dataset for inference. However, when dealing with massive data, which becomes common nowadays, the memory space may be not enough for the bVB inference. To tackle this problem, we now develop an online variational inference method. We assume that a local optimal configuration has been achieved to approximate the marginal distribution of observing the rating matrix $R_{\Omega}$ at time $t$, and there are totally $D$ user communities and  $K$ item groups. If new scores are observed at entires $\Omega'$ at time $t+1$, the marginal distribution of observing the rating matrix $R_{\Omega\cup\Omega'}$ can be reformulated as:
\begin{gather}
\begin{split}
p &(R_{\Omega\cup \Omega'}|\Xi)=\int dYd\Theta_{D+1,K+1}p(Y,\Theta_{D+1,K+1}|\Xi)\\
&\cdot p(R_{\Omega}|Y,\Theta_{D+1,K+1},\Xi)p(R_{\Omega'}|Y,\Theta_{D+1,K+1},\Xi).
\end{split}
\label{Eq_SeqVBMarg}
\end{gather}
It can be seen that the computation of marginal distribution for the newly observed rating matrix $R_{\Omega\cup\Omega'}$ consists of three parts, the prior distribution $p(Y,\Theta_{D+1,K+1}|\Xi)$, the likelihood of observing the rating matrix $R_{\Omega}$ and the likelihood of observing the rating matrix $R_{\Omega'}$. In Eq. \ref{Eq_SeqVBMarg}, we set $\Theta_{D+1,K+1}$ as the integral term, which provides our model the capability to generate new components on the fly. To approximate the marginal distribution $p(R_{\Omega\cup \Omega'}|\Xi)$, we assume that a new trial distribution $q^{t+1}(Y,\Theta_{D+1,K+1})$ can be found over the restricted function, as VB,
\begin{gather}
\begin{split}
q^{t+1}(Y,&\Theta_{D+1,K+1})=  q^{t+1}(Y)q^{t+1}(\pi)q^{t+1}(\omega)\\
&\cdot\prod_{d=1}^{D+1}q^{t+1}(\phi_d)\prod_{k=1}^{K+1}q^{t+1}(\psi_k).
\end{split}
\end{gather}

Let $\theta_j$ denote the $j$-th parameter belonging to the set of parameters $Y\cup \Theta_{D+1,K+1}$. Given the new observations sampled on the positions $\Omega'$, the variational approximation problem with respect to $q(\theta_j)$ is equivalent to 
\begin{equation}
\begin{split}
\ln& p(R_{\Omega\cup\Omega'}|\Xi)\\
&\geq  \int q^{t+1}(\theta_j)\mathbb{E}_{\neq j}\ln p(R_{\Omega}|Y,\Theta_{D+1,K+1},\Xi)d\theta_j\\
&+\int q^{t+1}(\theta_j)\mathbb{E}_{\neq j}\ln p(R_{\Omega'}|Y,\Theta_{D+1,K+1},\Xi)d\theta_j\\
&+\int q^{t+1}(\theta_j)\mathbb{E}_{\neq j}\ln p(Y,\Theta_{D+1,K+1}|\Xi)d\theta_j\\
&-\int q^{t+1}(\theta_j)\ln q^{t+1}(\theta_j)d\theta_j.\\
\end{split}
\end{equation}

In this way, the close-form solution of $q(\theta_j)$ satisfies the following condition:
\begin{equation}
\begin{split}
&q^{t+1}(\theta_j)\propto\exp[\mathbb{E}_{\neq j}\ln p(R_{\Omega'}|Y,\Theta_{D+1,K+1},\Xi)]\\
&\hspace{14.5mm}\cdot \exp[\mathbb{E}_{\neq j}\ln p(R_{\Omega},Y,\Theta_{D+1,K+1}|\Xi)]\\
&\propto q^{t}(\theta_j)\exp[\mathbb{E}_{\neq j}\ln p(R_{\Omega'}|Y,\Theta_{D+1,K+1},\Xi)]
\end{split}
\label{Eq_PosterInfOnline}
\end{equation} 
Compared with the traditional Variational Bayesian updating formula (\ref{Eq_PosterInf}), there is one additional term in our online inference algorithm, which introduces the influence of new observations into the updating formula. There are some related works proposed for deploying the principle of variational Bayesian for large-scale datasets, including stochastic variational inference (SVI$^1$) \cite{2013BroderickSVI}, stream variational inference (SVI$^2$) \cite{2014TankSVI}, Population Variational Bayesian (pVB) \cite{2015McInerneyPPos} and sequential variation approximation (SVA) \cite{2013LinSVA}. Compared with these seminal works, our method is different by using the Bayesian decomposition of the joint distribution, which is simple but efficient. Moreover, to our best knowledge, no previous online variational method has been applied to the DDP task.

As illustrated in Fig. \ref{fig_ProDef}, new entries can be observed in the rating matrix $R$ as time goes on. Thus, the model should be updated sequentially. In this section, our focus mainly concentrates on the mathematical derivation of the parameters associated with the users' taste matrix $A$, as Sec. \ref{sec_variInf}. Assuming we have achieved a local optimal solution for the observed rating matrix $R_{\Omega}$, we should update the parameters by following formulas in each single pass with the newly observed entries $\Omega'$.

{\bf Online estimation of the users' taste matrix $A$:} As proved in Sec. \ref{sec_variInf}, the posterior distribution over $a_i$ is also a Gaussian distribution. Thus, under the principle of online variational Bayesian (oVB), as shown in Eq. \ref{Eq_PosterInfOnline}, we derive the updating formulas over $a_i$ for different cases as follows,
\begin{itemize}
	\item[1] If the $i$-th user rates new items, we get 	
	\begin{equation}
	\begin{split}
	&\langle a_i\rangle^{t+1} = \langle a_i\rangle^{t}+\frac{1}{\sigma^2}\Sigma_{a_i}^{t}\Delta_{1,i}^{1+t} \\
	&\hspace{2mm}- \Sigma_{a_i}^{t}(\sigma^2{\bf I}+\Delta_{2,i}^{t+1}\Sigma_{a_i}^{t})^{-1}\Delta_{2,i}^{t+1} (\langle a_i\rangle^{t}+\frac{1}{\sigma^2}\Sigma_{a_i}^{t}\Delta_{1,i}^{1+t}),\\
	&\Sigma_{a_i}^{t+1} =  \Sigma_{a_i}^{t} - \Sigma_{a_i}^{t}(\sigma^2{\bf I}+\Delta_{2,i}^{t+1}\Sigma_{a_i}^{t})^{-1}\Delta_{2,i}^{t+1}\Sigma_{a_i}^{t}.\\
	\end{split}
	\label{Eq_OnlineUserTaste}
	\end{equation}
	where  $\Delta_{1,i}^{1+t}=\sum_{(i,j)\in\Omega'}r^{1+t}_{i,j}\langle b_j\rangle^{t}$, and $\Delta_{2,i}^{t+1} = \sum_{(i,j)\in\Omega'} \langle b_jb_j^T\rangle^{t}$ summarize the information of newly rated items. $\Sigma_{a_i}$ denotes the covariance matrix of the posterior distribution over $a_i$, and it is equivalent to $\Sigma_{a_i} = \langle a_ia_i^T\rangle-\langle a_i\rangle\langle a_i\rangle^T $.  As we can observe,  the expected user's taste vector will be updated with the newly rated items.
	
	\item[2] If the $i$-th user changes his previous rating scores, we should update the user's taste by Eq. \ref{Eq_OnlineUserTaste}, but with $\Delta_{1,i}^{1+t}=\sum_{(i,j)\in\Omega'}(r^{t+1}_{i,j}-r^{t}_{i,j})\langle b_j\rangle^{t}$ and $\Delta_{2,i}^{1+t}$ remaining the same form. It can be understood practically that the re-rated item will enhance its weight in describing the user's taste if this user shows higher preference to this object. 
	
	\item[3] If someone registers a new account in the recommendation system and rates some items, the user's taste vector can by initialized by Eq. \ref{Eq_EStepUserTaste} directly, which solves the problem of \emph{cold start} naturally.
\end{itemize} 

Such updating problem has been previously tackled by Stochastic Gradient Descent (SGD) methods for matrix factorization. Starting with some initial value $ a_i^{t}$, SGD refines the parameter value by iterating the stochastic difference equation as follows:
\begin{equation}
 a_i^{t+1} = a^{t} - \epsilon^t\mathcal{L}'(a^{t}),
\end{equation}
where $\epsilon^{[t]}$ denotes a sequence of decreasing step sizes and $- \mathcal{L}'(a^{t})$ is the direction of steepest descent given the new observations. Compared with SGD, our method gets a local optimal solutions for $\langle a_i\rangle$ and $\Sigma_{a_i}^{t+1}$ at each iterative step. As shown empirically, it usually yields a faster convergence speed without the need of tuning the learning rate. Moreover, under the framework of Bayesian theory, we can integrate the prior knowledges into our model and avoid the overfitting problem by considering a family of models.

{\bf Online estimation of the membership vector $Z$:} With Eq. \ref{Eq_OnlineUserTaste}, we can update the user's taste with new observations. Before updating the membership of user $ i $, let
\begin{equation}
\begin{split}
\langle\Sigma_0^{-1} \rangle &= \iota_0W_0^{-1},\\
\langle\ln |\Sigma_0|\rangle & = \ln|W_0|-\psi(\frac{\iota_0}{2})-r\ln(2),
\end{split}
\end{equation}
which denote some initial parameters of the users' community. Thus, the probability of introducing a new component for $\langle a_i\rangle^{t+1}$ satisfies 
\begin{equation}
\begin{split}
\ln &q(z_i=D+1) \propto  \gamma_i(D+1)=\xi_{1,i}(0)+\xi_{2,i}(0),\\
\end{split}
\end{equation}
where $\xi_{1,i}(0),\xi_{2,i}(0)$ share the same forms with Eq. \ref{Eq_EStepMemDetail}. If we allow the online scheme to introduce a new component for every updating pass, there will be infinite components asymptotically, which is unnecessary. Empirically, $\gamma_i(D+1) $ is negligible enough for most passes, which indicates that the updated user's tastes can be adequately explained by existing communities, and there is no need of new community. In practice, we set a small value $\epsilon$ and increase $D$ to $D+1$ only when $ \gamma_i(D+1) >\epsilon$. This simple strategy is very efficient in controlling the model size in practical applications.

Thus, the indicator variable can be updated as follows:
\begin{equation}
\begin{split}
q_i(d) &= \frac{\exp(\gamma_i(d))}{\sum_{t=1}^{D}\exp(\gamma_i(t))},
\end{split}
\label{Eq_OnlineMember}
\end{equation}
which shares the same formulation with Eq. \ref{Eq_EStepMem}, besides the fact that one more community may be generated into the learning process on the fly.

To simplify our derivations, let 
\begin{equation}
\begin{split}
\vartheta^{t+1}_1(d) &= \sum_{i\in\Omega'}q^{t+1}_i(d)-q^{t}_i(d),\\
\vartheta^{t+1}_2(d) &= \sum_{i\in\Omega'}q^{t+1}_i(d)\langle a_i\rangle^{t+1}-q^{t}_i(d)\langle a_i\rangle^{t},\\
\vartheta^{t+1}_3(d) &= \sum_{i\in\Omega'}q^{t+1}_i(d)\langle a_ia_i^T\rangle^{t+1}-q^{t}_i(d)\langle a_ia_i^T\rangle^{t},
\end{split}
\end{equation}
which denotes the updates of the missing data for the $i$-th user's taste. For a newly registered user, we have $q_{i}({d})=0,\forall i\in\{1,\ldots,D\}$. Next, we will move on to introduce the updating of model parameters, $\Theta$.

{\bf Online estimation of the Beta vector $\pi$:} It has been known that the posterior distribution over the $d$-th entry in the vector $\pi$ is also a beta distribution with form $\mbox{Beta}(\eta_{1,d},\eta_{2,d})$. Under the principle of oVB, the beta distribution parameters can be updated as follows,
\begin{equation}
	\begin{split}
		&\hspace{3mm}\eta^{t+1}_{1,d}  = \eta^{t}_{1,d}+\vartheta^{t+1}_1(d),\\
		&\eta^{t+1}_{2,d}  =\eta^{t}_{2,d}+ \sum_{j=d+1}^{k} \vartheta^{t+1}_1(j).
	\end{split}
	\label{Eq_OnlinePi}
\end{equation}
It can be found that the posterior distribution parameters are adjusted by the updating term of membership vector. 

{\bf Online estimation of the $d$-th community mean vector $\mu_d$:} Under the principle in Eq. \ref{Eq_PosterInfOnline}, the posterior of $\mu_d$ is also a Gaussian distribution with parameters:

\begin{equation}
\begin{split}
\langle\mu_d\rangle^{t+1} & =\langle\mu_d\rangle^{t}-\frac{\vartheta^{t+1}_1(d)\langle\mu_d\rangle^{t}-\lambda_0\vartheta^{t+1}_2(d)}{\lambda_0+\rho_d^{t+1}},\\
\end{split}
\label{Eq_OnlineMu}
\end{equation}
with $\rho_d^{t+1} = \rho_d^{t} + \vartheta^{t+1}_1(d)$, denoting the updated expected number of users in the $d$-th community. Additionally, we have
\begin{equation}
	\begin{split}
		\langle\mu_d\mu_d^T\rangle^{t+1} & =  \frac{\lambda_0}{\lambda_0+\rho_d^{t+1}}{\langle\Sigma_d^{-1}\rangle^{t}}^{-1}+\langle\mu_d\rangle^{t+1}\langle\mu_d^T\rangle^{t+1}.
	\end{split}
\end{equation}

To simplify the formulations for sequential estimation of $\Sigma_d$, let
\begin{equation}
\begin{split}
&\hspace{12mm}\vartheta^{t+1}_4(d) = \langle\mu_d\rangle^{t+1} - \langle\mu_d\rangle^{t},\\
&\hspace{10mm}\vartheta^{t+1}_5(d) = \langle\mu_d\mu_d^T\rangle^{t+1}- \langle\mu_d\mu_d^T\rangle^{t},\\
&\vartheta^{t+1}_6(d) = \langle\mu_d\rangle^{t+1}\sum_{i\in\Omega'}\langle a_i^T\rangle^{t+1}-\langle\mu_d\rangle^{t}\sum_{i\in\Omega'}\langle a_i^T\rangle^{t},
\end{split}
\end{equation} 
which represent the updates of the $d$-th community mean vector.
 
{\bf Online estimation of the $d$-th community covariance matrix $\Sigma_d$:} The posterior distribution of $\Sigma_d$ is also an inverse-Wishart distribution, $i\mathcal{W}(W_d^{t+1},\iota^{t+1}_d)$ with:
\begin{equation}
	\begin{split}
		W_d^{t+1}  = &W_d^{t}- \frac{2}{\lambda_0}{\mu_0\vartheta^{t+1}_4(d)}-2\vartheta^{t+1}_6(d)\\
		&+ (U+\frac{1}{\lambda_0}) \vartheta^{t+1}_5(d)+\vartheta^{t+1}_3(d),\\
		\iota^{t+1}_d  = &\iota^{t}_d +\vartheta^{t+1}_1(d),
	\end{split}
\end{equation}
Thus, we can subsequently get
\begin{equation}
	\begin{split}
	    &\hspace{13mm}\langle\Sigma_d^{-1} \rangle^{t+1} ={ W_d^{t+1}}^{-1}\iota^{t+1}_d,\\
		&\langle\ln |\Sigma_d|\rangle^{t+1}  = \ln| W_d^{t+1}|-\psi(\frac{\iota^{t+1}_d}{2})-r\ln(2).
	\end{split}
	\label{Eq_OnlineCov}
\end{equation}

\begin{algorithm}
	\begin{algorithmic}[1]
		\renewcommand{\algorithmicrequire}{\textbf{Input:}}
		\renewcommand{\algorithmicensure}{\textbf{Output:}}
		\REQUIRE The new entries observed on positions $\Omega'$, the hyper-parameters $\Xi$, and the local optimal solution for the posterior distribution $q^t(Y,\Theta)$
		\ENSURE  The completed matrix $\hat{R}$
		\\ \textit{Initialization} :
		\STATE Update the users' taste matrix $A$ by Eq. \ref{Eq_OnlineUserTaste}
		\STATE Update the membership vector of users' taste $Z$ \\by Eq. \ref{Eq_OnlineMember}
		\STATE Update the Beta vector $\pi$ by Eq. \ref{Eq_OnlinePi}
		\STATE Update the $d$-th community mean vector $\mu_d$ by \\Eq. \ref{Eq_OnlineMu}
		\STATE Update the $d$-th community covariance matrix $\Sigma_d$ by Eq. \ref{Eq_OnlineCov}
		\RETURN $\hat{R} = \langle{A}\rangle^T\langle{B}\rangle$ 
	\end{algorithmic} 
	\caption{online Variational Bayesian (oVB) inference for HeMF}
	\label{Alg_OnlVarHeMF}
\end{algorithm}
\label{Sec_SeqVar}

In summary, given the newly observed entries, we can update the proposed HeMF model by successively executing the formulas from Eq. \ref{Eq_OnlineUserTaste} to Eq. \ref{Eq_OnlineCov}. The detailed procedure can be found in Alg. \ref{Alg_OnlVarHeMF}. It can be observed that the whole sequential updating procedures are triggered by the update of user's taste. Under the principle of Variational Bayesian, we update the selected user's taste with other parameters fixed.

\subsection{Notes about oVB}
The previous section discussed the procedure to update the given model configuration when there are newly observed entries. Actually, the online updating procedure, Alg. \ref{Alg_OnlVarHeMF}, should be triggered every time when some parameters change. To help understand the proposed oVB, this section comprehensively explains the updating behaviors of parameters when there are the newly observed rating entries.

 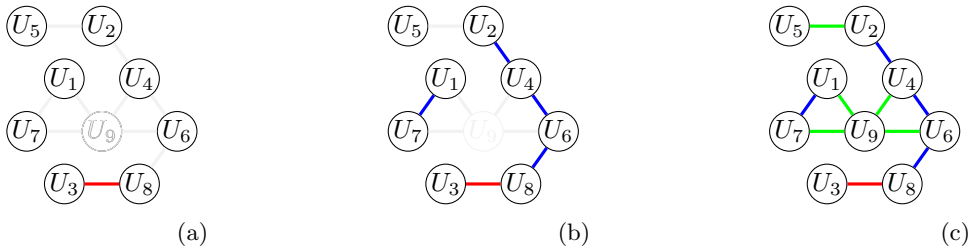
\begin{figure}[h!] 
 	\centering
 	\begin{subfigure}[b]{0.3\textwidth}
 		\begin{tikzpicture}
 		\tikzstyle{vertex}=[circle, draw, minimum size=15pt,inner sep=0pt]
 		
 		\foreach \name/\x in {U_5/1, U_2/2}
 		\node[vertex] (G-\name) at (\x,0) {$\name$};
 		
 		\foreach \name/\x in {U_1/1.5, U_4/2.5}
 		\node[vertex] (G-\name) at (\x,-0.7) {$\name$};
 		
 		\foreach \name/\x in {U_7/1, U_9/2, U_6/3}
 		\node[vertex] (G-\name) at (\x,-1.4) {$\name$};
 		
 		\node[vertex,draw=gray!10] (G-U_9) at (2,-1.4) {$\color{gray!10}U_9$};
 		
 		\foreach \name/\x in {U_3/1.5, U_8/2.5}
 		\node[vertex] (G-\name) at (\x,-2.1) {$\name$};
 		
		\foreach \from/\to in {U_3/U_8}
		\draw[red,very thick] (G-\from)--(G-\to);
 		\foreach \from/\to in {U_5/U_2,U_2/U_4,U_1/U_7,U_1/U_9,U_7/U_9,U_9/U_4,U_6/U_4,U_6/U_9,U_6/U_8}
 		\draw[gray!10,very thick] (G-\from)--(G-\to);
 		
 		\end{tikzpicture}
 		\subcaption{}
 	\end{subfigure}
 	\begin{subfigure}[b]{0.3\textwidth}
 		\begin{tikzpicture}
 		\tikzstyle{vertex}=[circle,draw,minimum size=15pt,inner sep=0pt]
 		
 		\foreach \name/\x in {U_5/1, U_2/2}
 		\node[vertex] (G-\name) at (\x,0) {$\name$};
 		
 		\foreach \name/\x in {U_1/1.5, U_4/2.5}
 		\node[vertex] (G-\name) at (\x,-0.7) {$\name$};
 		
 		\foreach \name/\x in {U_7/1, U_6/3}
 		\node[vertex] (G-\name) at (\x,-1.4) {$\name$};
 		
 		\node[vertex,draw=gray!10] (G-U_9) at (2,-1.4) {$\color{gray!10}U_9$};
 		
 		\foreach \name/\x in {U_3/1.5, U_8/2.5}
 		\node[vertex] (G-\name) at (\x,-2.1) {$\name$};
 			
		\foreach \from/\to in {U_3/U_8}
		\draw[red,very thick] (G-\from)--(G-\to);
 		\foreach \from/\to in {U_1/U_7,U_6/U_4,U_2/U_4,U_6/U_8}
 		\draw[blue,very thick] (G-\from)--(G-\to);
 		\foreach \from/\to in {U_5/U_2,U_1/U_9,U_7/U_9,U_9/U_4,U_6/U_9}
 		\draw[gray!10,very thick] (G-\from)--(G-\to);
 		
 		\end{tikzpicture}
 		\subcaption{}
 	\end{subfigure}
 	\begin{subfigure}[b]{0.3\textwidth}
 		\begin{tikzpicture}
 		\tikzstyle{vertex}=[circle,draw,minimum size=15pt,inner sep=0pt]
 		
 		\foreach \name/\x in {U_5/1, U_2/2}
 		\node[vertex] (G-\name) at (\x,0) {$\name$};
 		
 		\foreach \name/\x in {U_1/1.5, U_4/2.5}
 		\node[vertex] (G-\name) at (\x,-0.7) {$\name$};
 		
 		\foreach \name/\x in {U_7/1, U_9/2, U_6/3}
 		\node[vertex] (G-\name) at (\x,-1.4) {$\name$};
 		
 		\foreach \name/\x in {U_3/1.5, U_8/2.5}
 		\node[vertex] (G-\name) at (\x,-2.1) {$\name$};
 		
		\foreach \from/\to in {U_3/U_8}
		\draw[red,very thick] (G-\from)--(G-\to);
		\foreach \from/\to in {U_1/U_7,U_6/U_4,U_2/U_4,U_6/U_8}
		\draw[blue,very thick] (G-\from)--(G-\to);		
		\foreach \from/\to in {U_5/U_2,U_1/U_9,U_7/U_9,U_9/U_4,U_6/U_9}
		\draw[green,very thick] (G-\from)--(G-\to);
 		
 		\end{tikzpicture}
 		\subcaption{}
 	\end{subfigure}
 	\caption{The evolution of users' graph, where two users will be linked if they rate the same items. (a), (b) and (c) denote the users' graphs at time $t$, $t+1$ and $t+2$ respectively.}
 	\label{fig_updatingflow}
 \end{figure}
 
 As illustrated in Fig. \ref{fig_updatingflow}, we have constructed a graph for the sparsely observed matrix shown in Fig. \ref{fig_ProDef}, where two users are linked when they have rated same items. It can be observed that at time $t$, only the 3$^{rd}$ user and the 8$^{th}$ user are linked. At time $t+1$, the 6$^{th}$ user rated the 5$^{th}$ item, thus a link between the 4$^{th}$ user and the 6$^{th}$ user is established. At time $t+2$, it can be found that there are no more isolated nodes in the graph, resulting to a connected graph. With these graphs as an example, we will explain how our online learning algorithm behaves.
 
At time $t+1$, the 6$^{th}$ user rated the 5$^{th}$ item, the 6$^{th}$ user's taste vector $a_6$ should be updated with Eq. \ref{Eq_OnlineUserTaste}, which states that the expected taste vector would be updated by a term with respect to the newly rated item $b_5$. At the same time, the property vector of the 5$^{th}$ item should also be changed due to the 6$^{th}$ user. It can be found that the taste vector of 4$^{th}$ user is also determined by his/her rated items, including $I_5$ and $I_7$, and the updating of the 6$^{th}$ user's taste vector would cause the updating of the 4$^{th}$ user via their commonly rated item, $I_5$. At time $t+2$, it can be found that each node in the graph requires to be re-estimated when there is any newly observed rating. And the model parameters should be updated based on the newly estimated taste and property vectors accordingly. 
 
Practically, we can divide a massive dataset into small ones, and import them into Alg. \ref{Alg_OnlVarHeMF} sequentially. As stated previously, our method can provide a better solution to explain the entire massive dataset after each updating pass.  
 
\section{Empirical Variational Method}

For both bVB and oVB, a significant issue is that the hyper-parameters may be unknown empirically. This section addresses the issue and develops the empirical Variational Method~\cite{2006BishopPRML} for HeMF, which updates the hyper-parameters for each iteration. This method can further optimize the lower bound of $\mathcal{L}_{q(Y,\Theta)}(R_{\Omega}|\Xi)$ resulting to better performance. By taking the derivative of the lower bound with respect to each hyper-parameter, we derive the updating scheme as follows.

{\bf Optimizing the noise variance $\sigma^2$:} By setting the derivative $\frac{\partial \mathcal{L}(\sigma^2)}{\partial \sigma^2} = 0$, we get the optimal solution for $\sigma^2$ as follows,
\begin{equation}
\begin{split}
\sigma^2& = \frac{1}{|\Omega|}\sum_{(i,j)\in\Omega} r_{i,j}^Tr_{i,j}-2\langle b_j^T\rangle \langle a_i\rangle r_{i,j} + \mbox{Tr}\langle b_jb_j^T\rangle\langle a_ia_i^T\rangle,
\end{split}
\label{MStep_Offline}
\end{equation}
where $|\Omega|$ denotes the number of observed entries. In the online setting, the noise variance $\sigma^2$ can be updated with
\begin{equation}
\begin{split}
&{\sigma^2}^{t+1} = \frac{|\Omega|}{|\Omega\cup\Omega'|}{\sigma^2}^t\\
&+\frac{1}{|\Omega\cup\Omega'|}\sum_{(i,j)\in\Omega'} r_{i,j}^Tr_{i,j}-2\langle b_j^T\rangle \langle a_i\rangle r_{i,j} + \mbox{Tr}\langle b_jb_j^T\rangle\langle a_ia_i^T\rangle.
\end{split}
\label{MStep_Online}
\end{equation}

{\bf Optimizing the parameter over Beta distributions $\alpha$ and $\beta$:} The derivative of $\mathcal{L}(\alpha)$ with respect to $\alpha$ can be written as follows
\begin{equation}
\begin{split}
\mathcal{L}'&(\alpha)=\sum_{d=1}^{D}[ \psi(\eta_{2,d}) - \psi(\eta_{1,d}+\eta_{2,d}) ]\\
& + (\alpha - 1) D[\ln {\Gamma(\alpha)} + \psi(\alpha)]
\end{split}
\end{equation}
The new $\alpha$ can be obtained by the gradient method as:
\begin{equation}
\alpha^{new} = \alpha^{old} - \epsilon_{\alpha} \mathcal{L}'(\alpha),
\label{Eq_MStepalpha}
\end{equation}
where $\epsilon_{\alpha}$ is the learning rate used for updating $\alpha$. By altering the notions, we can get the updating formula for $\beta$ in the same way.

{\bf Optimizing the parameter over $\mu_0$:} The lower bound function with respect to $\mu_0$ can be reformulated as follows:
\begin{equation}
\begin{split}
\mathcal{L}(\mu_0) 
& \propto \mu_0^T\langle\Sigma^{-1}\rangle\langle\mu\rangle -\frac{1}{2}\mu_0^T\langle\Sigma^{-1}\rangle\mu_0.
\end{split}
\end{equation}
And the optimal solution for $\mu_0$ is
\begin{equation}
\mu_0 = \langle\mu\rangle = \frac{1}{D}\sum_{d=1}^D\mu_d.
\label{Eq_MStepMu}
\end{equation}
The optimal solution for $\nu_0$ can be achieve alternatively as $\nu_0 = \langle\nu\rangle$.

{\bf Optimization the parameter over $\lambda_0$:} $\lambda_0$ is a hyperparameter controlling the diversity of the prior distribution of $\mu_0$. By setting $\frac{\partial \mathcal{L}(\lambda_0)}{\partial \lambda_0} = 0$, we can solve the optimal solution for $\lambda_0$ with,
\begin{equation}
\begin{split}
&\lambda_0=\\
& \frac{\sum_{d=1}^{D} (\mbox{tr}\langle\mu_d\mu_d^T\rangle\langle\Sigma_d^{-1}\rangle-2\mu_0^T\langle\Sigma_d^{-1}\rangle\langle\mu_d\rangle + \mu_0^T\langle\Sigma_d^{-1}\rangle\mu_0)}{L(D+K)}\\
&+ \frac{\sum_{k=1}^{K} (\mbox{tr}\langle\nu_k\nu_k^T\rangle\langle\Pi_k^{-1}\rangle-2\nu_0^T\langle\Pi_k^{-1}\rangle\langle\nu_k\rangle + \nu_0^T\langle\Pi_k^{-1}\rangle\nu_0)}{L(D+K)},
\end{split}
\label{Eq_Msteplambda}
\end{equation}
which summarizes the diversity of the model parameters $\{\mu_d\}_{d=1}^D$ and $\{\nu_k\}_{k=1}^K$.

{\bf Optimizing the parameter over $W_0$ and $\iota_0$:} Finally, we derive the updating formulas for hyperparamters controlling the inverse-wishart distribution. As shown in \cite{1979haff}, the detailed formulation of inverse-wishart distribution can be written as,
\begin{equation}
\begin{split}
&\ln i\mathcal{W}(\Sigma_d|W_0,\iota_0)=-0.5\mbox{tr}(W_0\Sigma_d^{-1})-\frac{\iota_0+L+1}{2}\ln|\Sigma_d|\\
&+\frac{\iota_0}{2}\ln|W_0|-\frac{\iota_0L}{2}\ln 2 - \ln\Gamma_L(\frac{\iota_0}{2})\\
\end{split}
\end{equation}
Thus, by taking the derivative of $\mathcal{L}$ with respect to $W_0$ and $\iota_0$, we can get the updating formulas as follows,

\begin{equation}
\begin{split}
&W_0 = {(D+K)\iota_0}(\sum_{d=1}^{D}\Sigma_d^{-1}+\sum_{k=1}^{K}\Pi_k^{-1})^{-1}\\
&\hspace{15mm}\iota_0^{new} = \iota_0^{old} - \epsilon_{\iota_0} \mathcal{L}'(\iota_0),
\end{split}
\label{Eq_MStepSigma}
\end{equation}
with
\begin{equation}
\begin{split}
&\mathcal{L}'(\iota_0)= \frac{(D+K)}{2}[\ln|W_0|-L\ln 2-\psi_L(\frac{\iota_0}{2})] \\
&-\frac{1}{2}(\sum_{d=1}^D \ln|\Sigma_d|+\sum_{k=1}^K \ln|\Pi_k|)\\
\end{split}
\end{equation}

The detailed procedures of optimizing the hyper-parameters can be found in Alg. \ref{Alg_EmpVB}. Since all of the hyper-parameters are updated by the gradient method, our empirical variational methods are theoretically guaranteed to converge.

\begin{algorithm}
 \begin{algorithmic}[1]
 	\renewcommand{\algorithmicrequire}{\textbf{Input:}}
 	\renewcommand{\algorithmicensure}{\textbf{Output:}}
 	\REQUIRE The rating $R$ observed on positions $\Omega$
 	\ENSURE  The completed matrix $\hat{R}$
 	\\ \textit{Initialisation} :
 	\WHILE{non converge}
 	\STATE The estimating procedures in Alg. \ref{Alg_VarHeMF} or Alg. \ref{Alg_OnlVarHeMF}.
 	\STATE Optimize the noise variance $\sigma^2$ by Eq. \ref{MStep_Offline} or \ref{MStep_Online}.
 	\STATE Optimize the hyperparameter of Beta distributions\\ $\alpha$ and $\beta$ by Eq. \ref{Eq_MStepalpha}
 	\STATE Optimize the hyperparameter $\mu_0,\nu_0$ by Eq. \ref{Eq_MStepMu}
 	\STATE Optimize the hyperparameter $\lambda_0$ by Eq. \ref{Eq_Msteplambda}
 	\STATE Optimize the hyperparameter $W_0$ and $\iota_0$ by Eq. \ref{Eq_MStepSigma}
 	\ENDWHILE
 	\RETURN $\hat{R} = \langle A\rangle^T\langle B\rangle$ 
 \end{algorithmic} 
 \caption{empirical Variational Bayesian (eVB) inference for HeMF}
 \label{Alg_EmpVB}
\end{algorithm}

%
%
%
%
%

\section{Experiments and Discussions}
To evaluate the performance of the proposed methods, we apply them on 4 benchmark movie rating collaborative filtering datasets, i.e., the Netflix Prize dataset\footnote{http://www.netflixprize.com/}, the EachMovie dataset\footnote{http://grouplens.org/datasets/eachmovie/}, the 1M and Latest MovieLens Datasets\footnote{http://grouplens.org/datasets/movielens/}. We take two state-of-the-art methods as the baseline approaches, SGD and M$^3$F \cite{2010MackeyM3F}, whose source codes can be downloaded from the website\footnote{https://code.google.com/p/m3f/}.  For a fair comparison, all experiments are conducted on an Intel Core i7 920 2.67GHz CPU with 12 G RAM. For SGD and M$^3$F, we use the default parameters reported in their papers, and set the same initial configurations with our methods.

A standard cross-validation technique \cite{2004BengioCV} is applied to estimate the performance of each approach. $90\%$ entries are randomly selected as the training set, while the left $10\%$ entries are treated as the unobserved dyadic for testing. To compare the online learning performance of oVB with SGD,  we further design a series of online learning experiments. After each standard cross-validation split, we further divide the training set into several smaller ones, which will then be sub-sequentially and repeatedly  imported into the online learning procedures, oVB and SGD.  As most researchers do, we evaluate the performance of all methods by RMSE of testing dataset, which is defined as:
\begin{equation}
RMSE =  \sqrt{\frac{1}{\sum_{i,j\in\Omega}1}\sum_{i,j\in\Omega}(r_{ij}-\hat{r}_{ij})^2},
\end{equation} 
where $\hat{r}_{ij}$ denotes the predicted rating value. It is obvious that a smaller RMSE value indicates better performance.

In all the experiments conducted in this paper, we apply an empirical variational Bayesian (eVB) updating procedures after both batch variational Bayesian (bVB) and online variational Bayesian (oVB). During the implementation of oVB, we observe that some components introduced at early sequential pass would become useless. We thus introduce a mechanism to merge similar communities. The only required parameters for our methods are the learning rates used in the eVB procedures, which is set to be $0.001$ simply in all the experiments.
 
\subsection{1M MovieLens and EachMovie Datasets}
We first evaluate our models on the smaller datasets, 1M MovieLens and EachMovie, which can be effectively processed by bVB and M$^3$F models. The EachMovie dataset contains 2.8 million ratings in $\{1,\ldots,6\}$ distributed across 1648 movies and 74424 users. The 1M movieLens dataset has 6040 users, 3952 movies, and 1 million ratings in  $\{1,\ldots,5\}$. Following the "weak generalization" ratings prediction experiment in \cite{2004MarlinBPMF,2010MackeyM3F}, for each user in the training set, we withhold a single rating for the test set. All reported results are averaged over the same 10 random train-test cross-validate splits used in \cite{2004MarlinBPMF,2010MackeyM3F}. 

\begin{table}
	\centering
	\begin{tabular}{ lll }
		\hline
		\multicolumn{3}{ c }{\bf 1M MovieLens} \\
		\hline
		Method & Training error & Testing error \\ \hline
		M$^3$F-TIB(1,1,30)  &0.8029 &0.8491\\
		M$^3$F-TIB(2,1,40)  &0.7978 &0.8478\\
		M$^3$F-TIF(1,2,30) & 0.8015 &0.8484\\
		M$^3$F-TIF(2,2,40)  &0.8012 &0.8489\\
		HeMF-bVB &0.7723 &{\bf 0.8311}\\ 
		\hline
		\multicolumn{3}{ c }{\bf EachMovie} \\
		\hline
		Method & Training error & Testing error \\ \hline
		M$^3$F-TIB(1,2,40) &0.5924 &1.0993\\
		M$^3$F-TIB(2,1,40) &0.6145 &1.0920\\
		M$^3$F-TIF(1,1,30) &0.7863 &1.1021\\
		M$^3$F-TIF(1,2,30) & 0.7846 &1.1040\\
		HeMF-bVB & 0.8835&{\bf 1.0845}\\ 
		\hline
	\end{tabular}
	\caption{1M MoiveLens and EachMovie RMSE score for two M$^3$F models with optimal configurations, All scores are averaged across 10 standardized cross-validation splits. Parentheses indicate topic counts as well as the initial hidden dimensionality for  M$^3$F models $(K^U,K^M,D)$. Best results for each dataset have been boldened. }
	\label{tab_1meach}
\end{table}

Table \ref{tab_1meach} reports the predictive performance of the proposed Heterogeneous Matrix Factorization (HeMF) Model inferred by bVB method. To achieve the best performance of competitive methods, we implement M$^3$F \cite{2010MackeyM3F} with a variety of factor dimensionality and topic counts. The 4 optimal configurations of M$^3$F has been selected for our comparisons. All results are reported as the normalized mean average error.

\begin{figure}[h!]
	\centering
	\includegraphics[width=0.9\textwidth]{./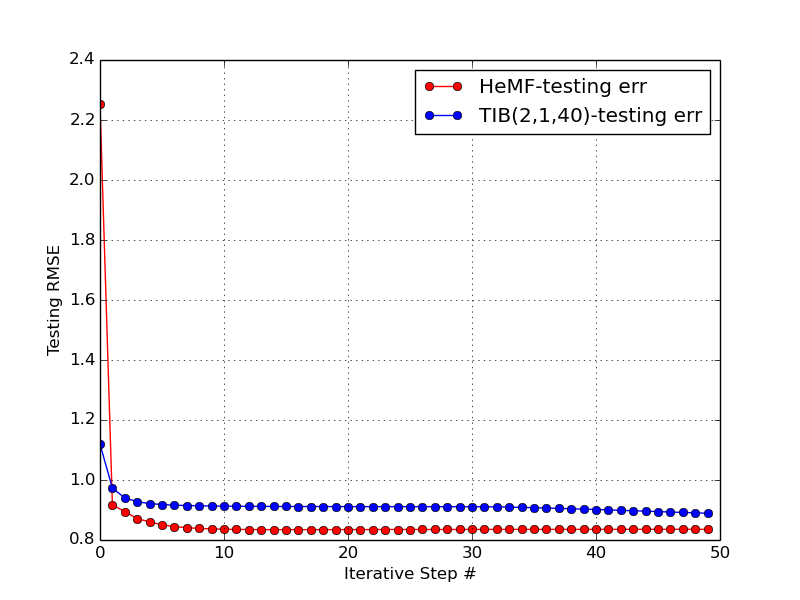}
	\caption{The testing RMSE of M$^3$F and HeMF on 1M MovieLens dataset.}
	\label{fig_1mHeMF}
\end{figure}

It can be observed that our proposed HeMF model inferred by bVB systematically outperforms the current state-of-the-art models, M$^3$F models, on both the 1M Movielens and EachMovie datasets. Note that, for EachMovie datasets, it can be found that all M$^3$F methods achieve better training RMSE results, but with worse testing performance, which indicates that M$^3$F suffers from the overfitting problem while the the proposed HeMF model well avoid it. As shown in Fig. \ref{fig_1mHeMF}, even starting with a worse initial configuration, the proposed HeMF model can converges to a better optimal solution using less iterative steps.

Since the initial hidden dimensionality of each component in HeMF is set as 20, the results indicate that the assumption of mixed membership offers greater predictive power than simply varying the dimensionality and topic counts in M$^3$F models. Moreover, with the help of Dirichlet Process, the component of groups and the corresponding dimensionality can be determined automatically during the leaning process.

We further compare our online variational Bayesian (oVB) inference approach with the most recent sequential learning of Matrix Factorization, SGD. Starting with the same initial configuration, both SGD and oVB are fed with the same sequential observations, which contains 30 samples each pass in this experiment. Fig. \ref{fig_1mSeq} illustrates the predictive performance of our sequential variational approach and the stochastic gradient descent method. It can be observed that the solution given by SGD always vibrates around an unsatisfied point, while the testing RMSE given by oVB decreases continuously and finally converges to a local optimal solution. 

To investigate the performance of the proposed oVB under different conditions, we further test it on sequential datasets with different size. As confirmed by Fig. \ref{fig_1mSeqComp}, oVB converges much faster and gets better performance when fed with more samples in each sequential pass. 

\begin{figure}[h!]
	\centering
	\includegraphics[width=0.9\textwidth]{./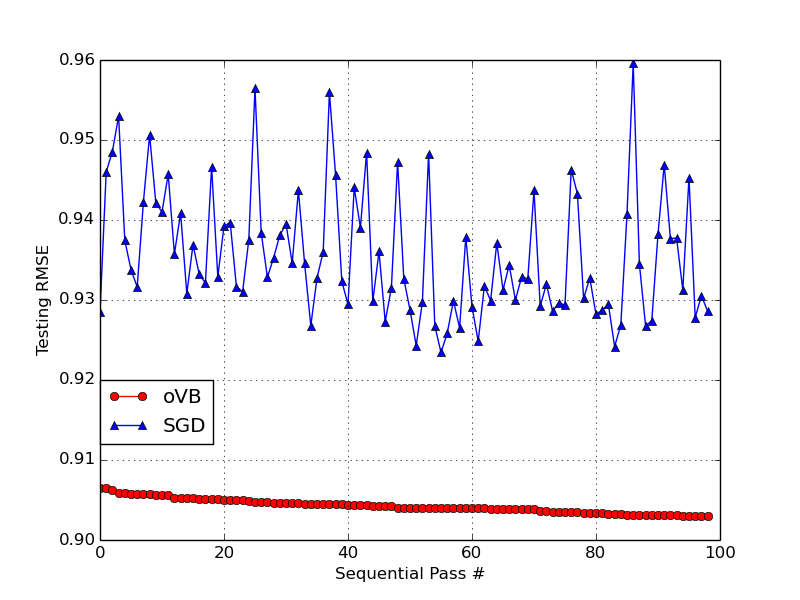}
	\caption{The testing RMSE of oVB and SGD on 1M MovieLens dataset.}
	\label{fig_1mSeq}
\end{figure}

\begin{figure}[h!]
	\centering
	\includegraphics[width=0.9\textwidth]{./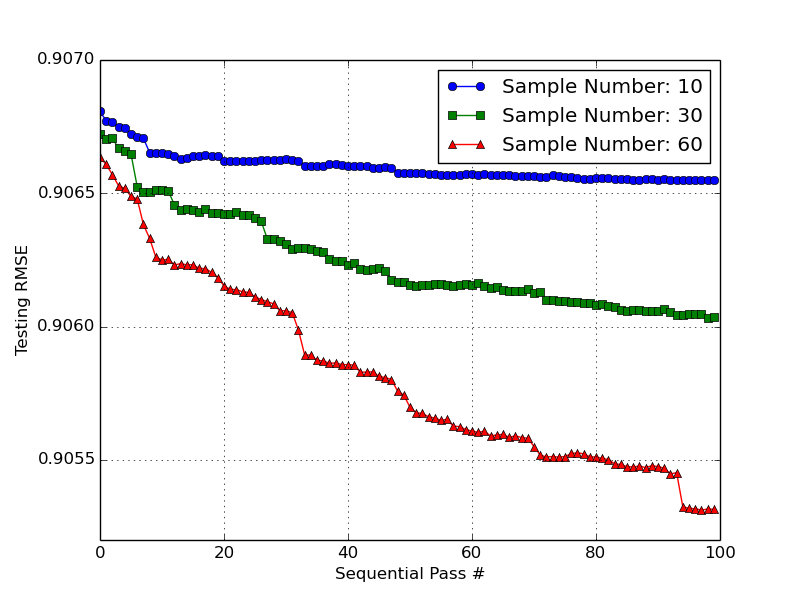}
	\caption{The testing RMSE of oVB, on 1M MovieLens dataset, when fed with samples of different size in each sequential pass.}
	\label{fig_1mSeqComp}
\end{figure}

\subsection{Latest MovieLens Dataset} 
We further test our methods on a larger dataset, the Latest MovieLens Dataset, which contains only 21,622,187 ratings distributed across 234,934 users and 30,106 movies. Thus, only $0.3\%$ entries are observed in this matrix, which is much more challenging for the DDP task. Since M$^3$F-TIB model with the configuration $(2,1,40)$ achieves the best performance among the M$^3$F family,  we set it as our baseline. 

The performance of each iterative step conducted in  M$^3$F-TIB  and HeMF has been illustrated in Fig. \ref{fig_ML_LATEST_Offline}. It can be observed that our method has much better performance with respect to RMSE value when starting with the same initial configuration. With the size of each sequential dataset as 60, the performance of oVB can be found in Fig. \ref{fig_1mLatestSeq}, which shows that oVB can achieve better results when given much richer data. When fed with more sequential data, the performance of traditional SGD becomes worse, which indicates that the SGD suffers from serious overfitting problem, which is the same as results on smaller datasets.

\begin{figure}[h!]
	\centering
	\includegraphics[width=0.9\textwidth]{./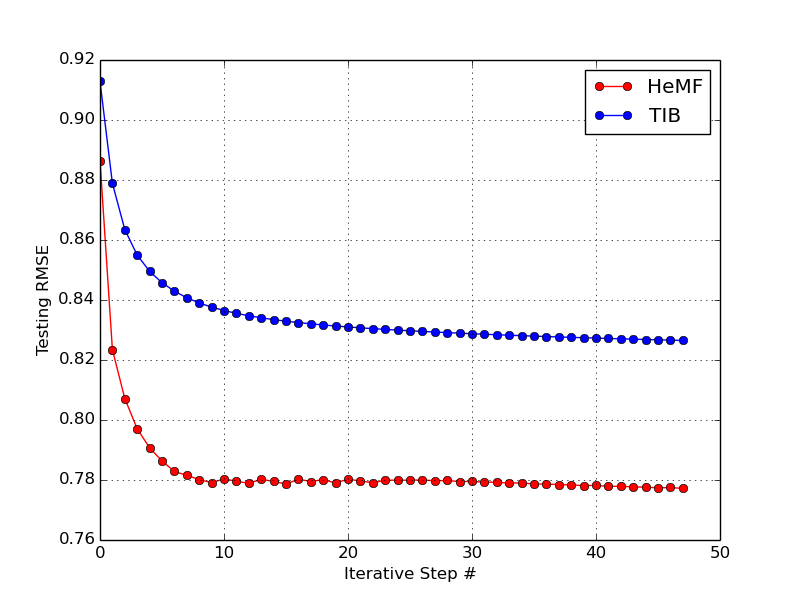}
	\caption{The testing RMSE of HeMF and TIB on Latest MovieLens dataset.}
	\label{fig_ML_LATEST_Offline}
\end{figure}

\begin{figure}[h!]
	\centering
	\includegraphics[width=0.9\textwidth]{./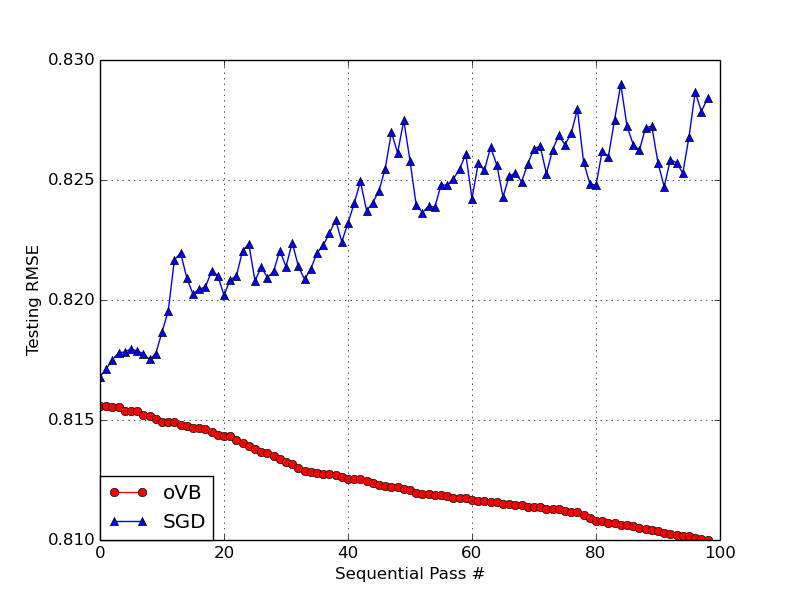}
	\caption{The testing RMSE of oVB and SGD on Latest MovieLens dataset, when feed with 60 samples  in each sequential pass.}
	\label{fig_1mLatestSeq}
\end{figure}

\subsection{Netflix Dataset}
We further implement all methods on the famous Netflix Prize dataset, which contains 100 million ratings in $\{1,\ldots,5\}$ distributed across 17,770 movies and 480,189 users. Because the public evaluation on the unobserved ratings for dyad Netflix Prize dataset is no longer available, we apply a standard cross-validation split on the observed ratings of Netflix Prize dataset, with the same setting on 1M MovieLens and EachMovie Datasets. This section sets M$^3$F-TIB model with the configuration $(2,1,200)$ as our baseline.

Fig. \ref{fig_Netflix_Offline} and \ref{fig_NetflixSeq} show the experimental results. It can be observed that HeMF can achieve better results, while the performance of TIB model get even worse with more iterative steps, which shows that the HeMF has better predictive power.

Fig. \ref{fig_Netflix_Offline} illustrates the online performance of oVB and SGD. It can be observed that the proposed SVA achieves better testing RMSE, while SGD gives an unstable estimation, and even get worse when fed with more samples.

\begin{figure}[h!]
	\centering
	\includegraphics[width=0.9\textwidth]{./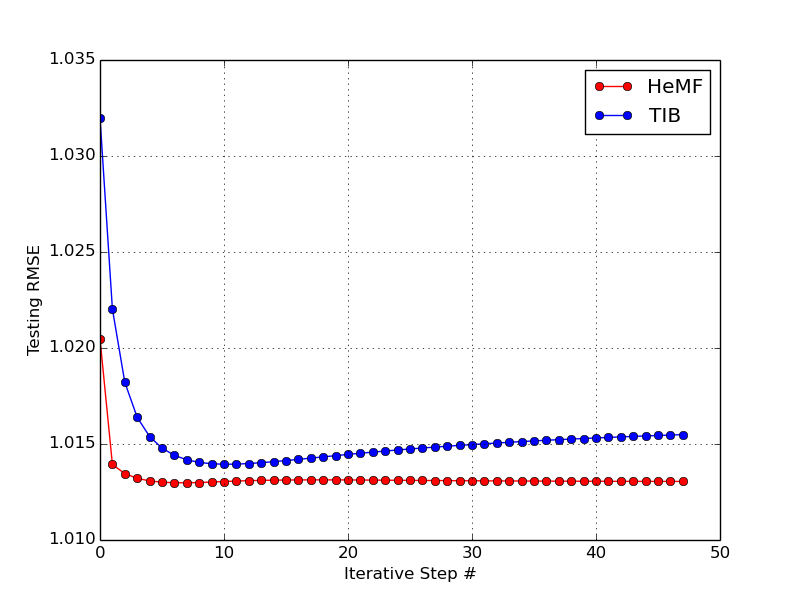}
	\caption{The testing RMSE of HeMF and TIB on Netflix dataset.}
	\label{fig_Netflix_Offline}
\end{figure}

\begin{figure}[h!]
	\centering
	\includegraphics[width=0.9\textwidth]{./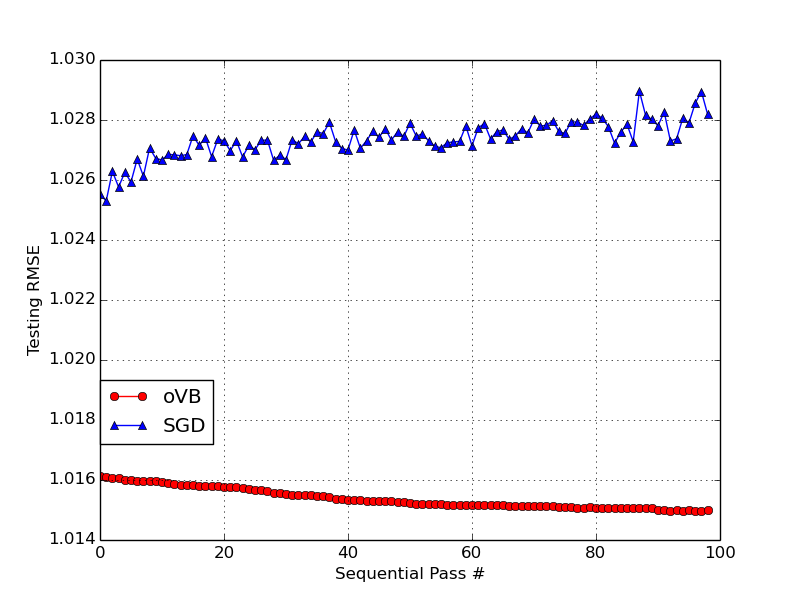}
	\caption{The testing RMSE of oVB and SGD on Netflix dataset, when feed with 60 samples in each sequential pass.}
	\label{fig_NetflixSeq}
\end{figure}
\section{Conclusions}
In this work, we developed a novel Bayesian dyadic data prediction model which integrates the complementary approaches of discrete mixed membership modeling and continuous latent factor modeling, which successfully accounts the heterogeneous property of users and their interaction in the practical recommendation systems. Two variational methods were derived to solve the proposed DDP task, one is the batch variational approximation, another is the online variational method for large-scale DDP. The performance of our methods were evaluated on real datasets including EachMovie, MovieLens and Netflix Prize. On each dataset, we found that our HeMF model achieved superior performance compared with the M$^3$F models. Specially, our online learning method oVB shows significant improvement compared with SGD, not only on the estimation accuracy but also on its robustness. It can be indicated that the proposed HeMF model with the inference methods can be a better candidate to cope with real-world DDP tasks, especially the application of recommendation systems. And the proposed online learning algorithm can well cope with the large-scale DDP problem in real-world applications.


\bibliographystyle{IEEEtran}
\bibliography{egbib}

\end{document}